\newcommand{\tool}{\texttt{Sphynx}\xspace}
\newcommand{\alg}{\texttt{PredHyperNet}\xspace}
\newcommand{\intalg}{\texttt{PredInt}\xspace}
\newcommand{\prop}{LDCP\xspace}
\newcommand{\propa}{locally differentially classification private (LDCP)\xspace}
\newcommand{\propl}{local differential classification privacy (LDCP)\xspace}
\newcommand{\propcap}{Local Differential Classification Privacy\xspace}
\begin{document}

\title{Verification of Neural Networks' \propcap}
\author{Roie Reshef \and Anan Kabaha \and Olga Seleznova \and Dana Drachsler-Cohen}
\authorrunning{R. Reshef et al.}
\institute{
Technion, Haifa, Israel
\email{\{sror@campus.,anan.kabaha@campus.,olga.s@,ddana@ee.\}technion.ac.il}
}

\maketitle
\begin{abstract}
Neural networks are susceptible to privacy attacks.
To date, no verifier can reason about the privacy of individuals participating in the training set.
We propose a new privacy property, called \emph{\propl}, extending local robustness to a differential privacy setting suitable for black-box classifiers.
Given a neighborhood of inputs, a classifier is \prop if it classifies all inputs the 
same regardless of whether it is trained with the full dataset or whether any single entry is omitted.
A naive algorithm is highly impractical because it involves training a very large number of networks and verifying local robustness of the given neighborhood separately for every network.
We propose \tool, an algorithm that computes an abstraction of all networks, with a high probability, from a small set of networks, and verifies \prop directly on the abstract network.
The challenge is twofold:
network parameters do not adhere to a known distribution probability, making it difficult to predict an abstraction, and predicting too large abstraction harms the verification.
Our key idea is to transform the parameters into a distribution given by KDE, allowing to keep the over-approximation error small.
To verify \prop, we extend a MILP verifier to analyze an abstract network.
Experimental results 
show that by training only 7\% of the networks, \tool predicts an abstract network 
obtaining $93\%$ verification accuracy and 
reducing the analysis time by $1.7\cdot10^4$x.
\end{abstract}

\section{Introduction}
\label{sec:intro}

Neural networks are successful in various tasks but are also vulnerable to attacks.
One kind of attacks that is gaining a lot of attention is privacy attacks.
Privacy attacks aim at revealing sensitive information about the network or its training set.
For example, membership inference attacks recover entries in the training set~\cite{member_infer,anal_mia,under_mia,enhance_mia,label_mia,member_label,Monerale23},
model inversion attacks reveal sensitive attributes of these entries~\cite{model_inv,FredriksonLJLPR14}, 
model extraction attacks recover the model's parameters~\cite{steal_model}, and property inference attacks infer global properties of the model~\cite{prop_ifer}.
Privacy attacks have been shown successful even against platforms providing a limited access to a model, including black-box access and a limited number of queries.
Such restricted access is common for platforms providing machine-learning-as-a-service (e.g., Google's Vertex AI\footnote{\url{https://cloud.google.com/vertex-ai}}, Amazon's ML on AWS\footnote{https://aws.amazon.com/machine-learning/}, and BigML\footnote{https://bigml.com/}).

\sloppy
A common approach to mitigate privacy attacks is \emph{differential privacy} (DP)~\cite{found_dp}.
DP has been adopted by numerous network training algorithms~\cite{dl_dp,ldp_dl,dp_grad,gauss_dp,dp_view}.
Given a privacy level, a DP training algorithm generates the same network with a similar probability, regardless of whether a particular individual's entry is included in the dataset.
However, DP is not an adequately suitable privacy criterion for black-box classifiers, for two reasons.
First, it poses a too strong requirement: it requires that the training algorithm returns the same network (i.e., assign the same score to every class), whereas black-box classifiers are considered the same if they predict the same class (i.e., assign the \emph{maximal} score to the same class).
Second, DP can
only be satisfied by randomized algorithms, adding noise to the computations.
Consequently, the accuracy of the resulting network decreases.
The amount of noise is often higher than necessary because the mathematical analysis of differentially private algorithms is highly challenging and thus practically not tight (e.g., it often relies on compositional theorems~\cite{found_dp}).
Thus, network designers often avoid adding noise to their networks.
This raises the question: \emph{What can a network designer provide as a privacy guarantee for individuals participating in the training set of a black-box classifier?}

We propose a new privacy property, called \emph{\propl}.
Our property is designed for black-box classifiers, whose training algorithm is not necessarily DP.
Conceptually, it extends the local robustness property, designed for adversarial example attacks~\cite{nn_prop,adver_explain}, to a ``deterministic differential privacy'' setting.
Local robustness requires that the network classifies all inputs in a given neighborhood the same.
We extend this property by requiring that the network {classifies} all inputs in a given neighborhood the same \emph{regardless of whether a particular individual's entry is included in the dataset}.

Proving that a network is \prop is challenging, because it requires to check the local robustness of a very large number of networks: $|\mathcal{D}|+1$ networks, where $\mathcal{D}$ is the dataset, which is often large (e.g., $>10$k entries).
To date,
verification of local robustness~\cite{smt_relu,safe_abst,robust_milp,abst_cert,Dung20,bound_prop,poly_gpu,poly_rnn}, analyzing a single network, takes non-negligible time.
A naive, accurate but highly unscalable algorithm checks \prop by training every possible network (i.e., one for every possibility to omit a single entry from the dataset), checking that all inputs in the neighborhood are classified the same network-by-network, and verifying that all networks classify these inputs the same.
However, this naive algorithm does not scale since training and analyzing thousands of networks is highly time-consuming.

 \begin{figure*}[t]
    \centering
  \includegraphics[width=1\linewidth, trim=0 170 0 0, clip, page=2]{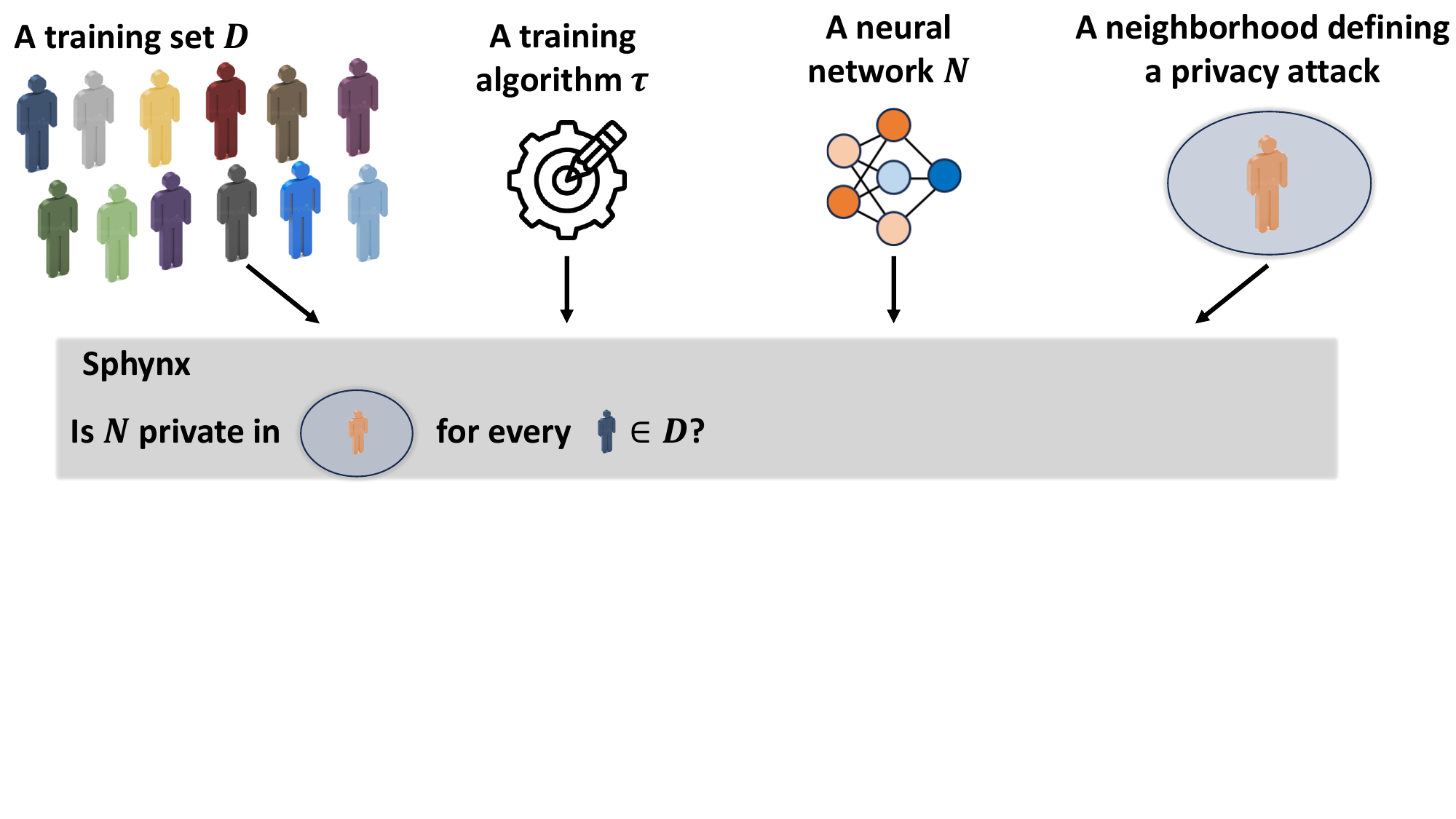}
    \caption{\tool checks leakage of individuals' entries at a given neighborhood.}
    \label{fig::sphynx_intro}
\end{figure*}

We propose \tool ({\bf S}afety {\bf P}rivacy analyzer via {\bf Hy}per-{\bf N}etworks) for determining whether a network is \prop at a given neighborhood (\Cref{fig::sphynx_intro}).
\tool takes as inputs a network, its training set, its training algorithm, and a neighborhood.
Instead of training $|\mathcal{D}|$ more networks, \tool computes a \emph{hyper-network} abstracting these networks with a high probability (under several conditions).
A hyper-network abstracts a set of networks by associating to each network parameter an interval that contains the respective values in all networks.
If \tool would train all networks, computing the hyper-network would be straightforward.
However, \tool's goal is to reduce the high time overhead and thus it does not train all networks.
Instead, it predicts a hyper-network from a small number of networks.
\tool then checks \prop at the given neighborhood directly on the hyper-network.
These two ideas enable \tool to obtain a practical analysis time.
The main challenges in predicting a hyper-network are: (1)~network parameters do not adhere to a known probability distribution and (2)~the inherent trade-off between a sound abstraction, where each parameter's interval covers all values of this parameter in every network (seen and unseen) and the ability to succeed in verifying \prop given the hyper-network.
Naturally, to obtain a sound abstraction, it is best to consider large intervals for each network parameter, e.g., by adding noise to the intervals (like in adaptive data analysis~\cite{valid_adapt,prevent_false,stab_adapt,general_adapt,pres_adapt,adapt_median,emp_adapt,limit_general}).
However, the larger the intervals the harder it is to verify \prop, because the hyper-network abstracts many more (irrelevant) networks.
To cope, \tool transforms the network parameters into a distribution given by kernel density estimation (KDE), allowing to predict the intervals without adding noise.

To predict a hyper-network, \tool executes an iterative algorithm.
In every iteration, it samples a few entries from the dataset.
For each entry, it trains a network given all the dataset except this entry.
Then, given all trained networks, \tool predicts a hyper-network, i.e., it computes an interval for every network parameter.
An interval is computed by transforming every parameter's observed values into a distribution given by KDE, using normalization and the Yeo-Johnson transformation~\cite{transform}.
Then, \tool estimates whether the hyper-network abstracts every network with a high probability, and if so, it terminates.

Given a hyper-network, \tool checks \prop directly on the hyper-network.
To this end, we extend a local robustness verifier~\cite{robust_milp}, relying on mixed-integer linear programming (MILP), to analyze a hyper-network.
Our extension replaces the equality constraints, capturing the network's affine computations, with inequality constraints, since 
our network parameters are associated with intervals and not real numbers.
To mitigate an over-approximation error, we propose two approaches.
The first approach relies on preprocessing to the network's inputs and the second one relies on having a lower bound on the inputs.

We evaluate \tool on data-sensitive datasets: Adult Census~\cite{uci}, Bank Marketing~\cite{bank_market} and Default of Credit Card Clients~\cite{credit_card}.
We verify \prop on three kinds of neighborhoods for checking safety 
to label-only membership attacks~\cite{label_mia,member_label,Monerale23}, adversarial example attacks in a DP setting (like~\cite{LecuyerAG0J19,PhanTHJSD20}), and sensitive attributes (like~\cite{model_inv,FredriksonLJLPR14}).
We show that by training only 7\% of the networks, \tool predicts a hyper-network abstracting an (unseen) network with a probability of at least $0.9$.
Our hyper-networks obtain 93\% verification accuracy.
Compared to the naive algorithm, \tool provides a significant speedup: it reduces the training time by 13.6x and the verification time by $1.7\cdot 10^4$x.

\section{Preliminaries}
\label{sec:prelim}

In this section, we provide the necessary background.

\paragraph{Neural network classifiers}
We focus on binary classifiers, which are popular for data-sensitive tasks.
As example, we describe the data-sensitive datasets used in our evaluation and their classifier's task (\Cref{sec:eval}).
Adult Census~\cite{uci} consists of user records of the socioeconomic status of people in the US.
The goal of the classifier is to predict whether a person's yearly income is higher or lower than 50K USD.
Bank Marketing~\cite{bank_market} consists of user records of direct marketing campaigns of a Portuguese banking institution.
The goal of the classifier is to predict whether the client will subscribe to the product or not.
Default of Credit Card Clients~\cite{credit_card} consists of user records of demographic factors, credit data, history of payments, and bill statements of credit card clients in Taiwan.
The goal of the classifier is to predict whether the default payment will be paid in the next month.
We note that our definitions and algorithms easily extend to non-binary classifiers.
A binary classifier $N$ maps an input, a user record, $x\in\mathcal{X}\subseteq[0,1]^d$ to a real number $N(x)\in\mathbb{R}$.
If $N(x)\geq0$, we say the classification of $x$ is $1$ and write $\text{class}(N(x))=1$, otherwise, it is $-1$, i.e., $\text{class}(N(x))=-1$.
We focus on classifiers implemented by a fully-connected neural network.
This network consists of an input layer followed by $L$ layers.
The input layer $x_0$ takes as input $x\in\mathcal{X}$ and passes it as is to the next layer (i.e., $x_{0,k}=x_k$).
The next layers are functions, denoted $f_1,f_2,\dots,f_L$, each taking as input the output of the preceding layer.
The network's function is the composition of the layers: $N(x)=f_L(f_{L-1}(\cdots(f_1(x))))$.
A layer $m$ consists of neurons, denoted $x_{m,1},\ldots,x_{m,k_m}$.
Each neuron takes as input the outputs of all neurons in the preceding layer and outputs a real number.
The output of layer $m$ is the vector $(x_{m,1},\ldots,x_{m,k_m})^T$ consisting of all its neurons' outputs.
A neuron $x_{m,k}$ has a weight for each input $w_{m,k,k'}\in \mathbb{R}$ and a bias $b_{m,k}\in \mathbb{R}$.
Its function is the composition of an affine computation, $\hat{x}_{m,k}=b_{m,k}+\sum_{k'=1}^{k_{m-1}}w_{m,k,k'}\cdot x_{m-1,k'}$, followed by an activation function computation, $x_{m,k}=\sigma(\hat{x}_{m,k})$.
Activation functions are typically non-linear functions.
In this work, we focus on the ReLU activation function, $\text{ReLU}(\hat{x})=\max(0,\hat{x})$.
We note that, while we focus on fully-connected networks, our approach can extend to other architectures, e.g., convolutional networks or residual networks.
The weights and biases of a neural network are determined by a training process.
A training algorithm $\mathcal{T}$ takes as inputs a network (typically, with random weights and biases) and a labelled training set $\mathcal{D}=\{(x_1,y_1),\ldots,(x_n,y_n)\}\subseteq \mathcal{X}\times\{-1,+1\}$.
It returns a network with updated weights and biases.
These parameters are computed with the goal of minimizing a given loss function, e.g., binary cross-entropy, capturing the level of inaccuracy of the network.
The computation typically relies on iterative numerical optimization, e.g., stochastic gradient descent (SGD).

\paragraph{Differential privacy (DP)}
DP focuses on algorithms defined over arrays (in our context, a dataset).
At high-level, an algorithm is DP if for any two inputs differing in a single entry, it returns the same output with a similar probability.
Formally, DP is a probabilistic privacy property requiring that the probability of returning different outputs is upper bounded by an expression defined by two parameters, denoted $\epsilon$ and $\delta$~\cite{found_dp}.
Note that this requirement is too strong for classifiers providing only black-box access, which return only the input's classification.
For such classifiers, it is sufficient to require that the classification is the same, and there is no need for the network's output (the score of every class) to be the same.
To obtain the DP guarantee, DP algorithms add noise, drawn from some probability distribution (e.g., Laplace or Gaussian), to the input or their computations.
That is, DP algorithms trade off their output's accuracy with privacy guarantees: the smaller the DP parameters (i.e., $\epsilon$ and $\delta$) the more private the algorithm is, but its outputs are less accurate.
The accuracy loss is especially severe in DP algorithms that involve loops in which every iteration adds noise.
The loss is high because (1)~many noise terms are added and (2)~the mathematical analysis is not tight (it typically relies on compositional theorems~\cite{found_dp}), leading to adding a higher amount of noise than necessary to meet the target privacy guarantee.
Nevertheless, DP has been adopted by numerous network training algorithms~\cite{dl_dp,ldp_dl,dp_grad,gauss_dp,dp_view}.
For example, one algorithm adds noise to every gradient computed during training~\cite{dl_dp}.
Consequently, the network's accuracy decreases significantly, discouraging network designers from employing DP.
To cope, we propose a (non-probabilistic) privacy property that (1) only requires the network's classification to be the same and (2) can be checked even if the network has not been trained by a DP training algorithm.

\paragraph{Local robustness}
Local robustness has been introduced in response to adversarial example attacks~\cite{nn_prop,adver_explain}.
In the context of network classifiers, an adversarial example attack is given an input and a space of perturbations and it returns a perturbed input that causes the network to misclassify.
Ideally, to prove that a network is robust to adversarial attacks, one should prove that \emph{for any valid input}, the network classifies the same under any valid perturbation.
In practice, the safety property that has been widely-studied is \emph{local robustness}.
 A network is locally robust at \emph{a given input} if perturbing the input by a perturbation in a given space does not cause the network to change the classification.
 Formally, the space of allowed perturbations is captured by a \emph{neighborhood} around the input.
\begin{definition}[Local Robustness]
Given a network $N$, an input $x\in\mathcal{X}$, a neighborhood $I(x)\subseteq \mathcal{X}$ containing $x$, and a label $y\in\{-1,+1\}$, the network $N$ is \emph{locally robust at $I(x)$ with respect to $y$} if $\forall x'\in I(x).\ \text{class}(N(x'))=y$.
\end{definition}
A well-studied definition of a neighborhood is the \emph{$\epsilon$-ball} with respect to the $L_\infty$ norm~\cite{smt_relu,safe_abst,robust_milp,abst_cert,Dung20,bound_prop,poly_gpu,poly_rnn}.
Formally, given an input $x$ and a bound on the perturbation amount $\epsilon\in\mathbb{R}^+$, the $\epsilon$-ball is: 
$I^\infty_\epsilon(x)=\left\{x'\mid\|x'-x\|_\infty \leq \epsilon\right\}$.
A different kind of a neighborhood captures \emph{fairness} with respect to a given sensitive feature 
$i\in [d]$  (e.g., gender)~\cite{Bastani0S19,redu_fair,paral_fair,RuossBFV20}:
$I^S_i(x)=\left\{x'\mid\bigwedge_{j\in [d]\setminus\{i\}} x'_j=x_j\right\}$.

\section{\propcap}
\label{sec:probdef}
In this section, we define the problem of verifying that a network is \propa at a given neighborhood.

\paragraph{Local differential classification privacy (LDCP)}
Our property is defined given a classifier $N$, trained on a dataset $\mathcal{D}$, and a neighborhood $I$, defining a space of attacks (perturbations).
It considers $N$ private with respect to an individual's entry $(x',y')$ if $N$ classifies all inputs in $I$ the same, whether $N$ has been trained with $(x',y')$ or not.
If there is discrepancy in the classification, the attacker may exploit it to infer information about $x'$.
Namely, if the network designer cared about a single individual's entry $(x',y')$, our privacy property would be defined over two networks: the network trained with $(x',y')$ and the network trained without $(x',y')$.
Naturally, the network designer wishes to show that the classifier is private for every $(x',y')$ participating in $\mathcal{D}$.
Namely, our property is defined over $|\mathcal{D}|+1$ networks.
The main challenge in verifying this privacy property is that the training set size is typically very large ($>$10k entries).
Formally, our property is 
an extension of local robustness to a differential privacy setting suitable for classifiers providing black-box access.
It requires that the inputs in $I$ are classified the same by \emph{every} network trained on $\mathcal{D}$ or trained on $\mathcal{D}$ except for any single entry.
Unlike DP, our definition is applicable to any network, even those trained without a probabilistic noise.
We next formally define our property.
\begin{definition}[\propcap]\label{def:ldcp}
Given a network $N$ trained on a dataset $\mathcal{D}\subseteq\mathcal{X}\times\{-1,+1\}$ using a training algorithm $\mathcal{T}$, a neighborhood $I(x)\subseteq\mathcal{X}$ and a label $y\in\{-1,+1\}$, the network $N$ is \emph{\propa} if (1)~$N$ is locally robust at $I(x)$ with respect to $y$, and (2)~for every $(x',y')\in\mathcal{D}$, the network of the same architecture as $N$ trained on $\mathcal{D}\setminus\{(x',y')\}$  using $\mathcal{T}$ is locally robust at $I(x)$ with respect to $y$.
\end{definition}
Our privacy property enables to prove safety against privacy attacks by defining a suitable set of neighborhoods.
For example, several membership inference attacks~\cite{label_mia,member_label,Monerale23} are label-only attacks.
Namely, they assume the attacker has a set of inputs and they can query the network to obtain their classification.
To prove safety against these attacks, given a set of inputs $X\subseteq \mathcal{X}$, one has to check \prop for every neighborhood defined by $I^\infty_0(x)$ where $x\in X$.
To prove safety against attacks aiming to reveal sensitive features (like~\cite{model_inv,FredriksonLJLPR14}), given a set of inputs $X\subseteq\mathcal{X}$, one has to check that the network classifies the same regardless of the value of the sensitive feature.
This can be checked by checking \prop for every neighborhood defined by $I^S_i(x)$ where $x\in X$ and $i$ is the sensitive feature.


\paragraph{Problem definition}
We address the problem of determining whether a network is \propa at a given neighborhood, while minimizing the analysis time.
A definite answer involves training and verifying $|\mathcal{D}|+1$ networks (the network trained given all entries in the training set, and for each entry in the training set, the network trained given all entries except this entry).
However, this results in a very long training time and verification time (even local robustness verification takes a non-negligible time).
Instead, our problem is to provide an answer which is correct with a high probability.
This problem is highly challenging.
On the one hand, the fewer trained networks the lower the training and verification time.
On the other hand, determining an answer, with a high probability, from a small number of networks is challenging, since network parameters do not adhere to a known probabilistic distribution.
Thus, our problem is not naturally amenable to a probabilistic analysis.

\paragraph{Prior work}
Prior work has considered different aspects of our problem.
As mentioned, several works adapt training algorithms to guarantee that the resulting network satisfies differential privacy~\cite{dl_dp,ldp_dl,dp_grad,gauss_dp,dp_view}.
However, these training algorithms tend to return networks of lower accuracy.
A different line of research proposes algorithms for \emph{machine unlearning}~\cite{unlearn_mask,eval_unlearn,toward_unlearn}, in which an algorithm retrains a network ``to forget'' some entries of its training set.
However, these approaches do not guarantee that the forgetting network is equivalent to the network obtained by training without these entries from the beginning.
It is also more suitable for settings in which there are multiple entries to forget, unlike our differential privacy setting which focuses on omitting a single entry at a time.
Several works propose verifiers to determine local robustness~\cite{verify_backdoor,verify_fair,verify_pw,smt_verify,paral_verify,convex_verify,smt_relu}.
However, these analyze a single network and not a group of similar networks.
A recent work proposes a proof transfer between similar networks~\cite{proof_trans} in order to reduce the verification time of similar networks.
However, this work requires to explicitly have all networks, which is highly time consuming in our setting.
Other works propose verifiers that check robustness to data poisoning or data bias~\cite{Sun0GP22,MeyerAD21,DrewsAD20}. These works consider an attacker that can manipulate or omit up to several entries from the dataset, similarly to \prop allowing to omit up to a single entry. However, these verifiers either target patch attacks of image classifiers~\cite{Sun0GP22}, which allows them to prove robustness without considering every possible network, or target decision trees~\cite{MeyerAD21,DrewsAD20}, relying on predicates, which are unavailable for neural networks. Thus, neither is applicable to our setting. 
\section{Our Approach}
\label{sec:split}

In this section, we present our approach for determining whether a network is \propa.
Our key idea is to \emph{predict an abstraction} of all concrete networks from a small number of networks.
We call the abstraction a \emph{hyper-network}.
Thanks to this abstraction, \tool does not need to train all concrete networks and neither verify multiple concrete networks.
Instead, \tool first predicts an abstract hyper-network, given a network $N$, its training set $\mathcal{D}$ and its training algorithm $\mathcal{T}$, and then verifies \prop directly on the hyper-network.
This is summarized in~\Cref{fig:flow}.
We next define these terms.

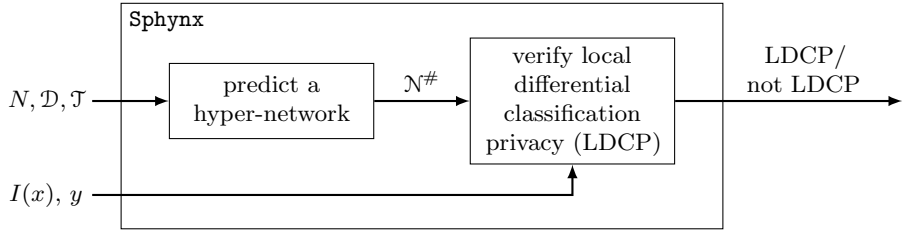
\begin{figure}[t]
    \centering
    \tikzstyle{io} = [rectangle, minimum width=1cm, minimum height=1cm, text centered, text width=1cm, draw=white, fill=white]
\tikzstyle{empty} = [rectangle, minimum width=1cm, minimum height=1cm, text centered, text width=1cm, draw=white, fill=white]
\tikzstyle{block} = [rectangle, minimum width=2.5cm, minimum height=1cm, text centered, text width=2.5cm, draw=black, fill=white]
\tikzstyle{box} = [rectangle, minimum width=8cm, minimum height=3cm, text centered, draw=black, fill=white]
\tikzstyle{arrow} = [thick,->,>=latex]

\begin{tikzpicture}[node distance=1cm]

\node (all) [box] {};
\node[below right] at (all.north west) {\tool};
\node (data) [io, left of=all, xshift=-4cm, yshift=0.2cm] {$N,\mathcal{D},\mathcal{T}$};
\node (gen) [block, right of=data, xshift=2cm] {predict a hyper-network};
\node (verify) [block, right of=gen, xshift=3cm] {verify \propl};
\node (prop) [io, below of=data, yshift=-0.25cm] {$I(x)$, $y$};
\node (res) [empty, right of=verify, xshift=4cm] {};

\draw [arrow] (data) -- (gen);
\draw [arrow] (gen) -- node[anchor=south] {$\mathcal{N}^\#$} (verify);
\draw [arrow] (prop) -| (verify);
\draw [arrow] (verify) -- node[anchor=south, text width=2cm, text centered, xshift=0.2cm]{\prop/\\not \prop} (res);

\end{tikzpicture} 
    \caption{Given a network classifier $N$, its training set $\mathcal{D}$ and its training algorithm $\mathcal{T}$, \tool predicts an abstract hyper-network $\mathcal{N}^\#$.
    It then checks whether $\mathcal{N}^\#$ is locally robust at $I(x)$ with respect to $y$ to determine whether $N$ is \prop.}
    \label{fig:flow}
\end{figure}

\paragraph{Hyper-networks}
A \emph{hyper-network} abstracts a set of networks $\mathcal{N}=\{N_1,\ldots,N_K\}$ with the same architecture and in particular the same set of parameters, i.e., weights $\mathcal{W}=\{w_{1,1,1},\ldots,w_{L,d_L,d_{L-1}}\}$ and biases $\mathcal{B}=\{b_{1,1},\ldots,b_{L,d_L}\}$.
The difference between the networks is the parameters' values.
In our context, $\mathcal{N}$ consists of the network trained with the full dataset and the networks trained without any single entry:   $\mathcal{N}=\{\mathcal{T}(N,\mathcal{D})\}\cup \{\mathcal{T}(N,\mathcal{D}\setminus\{(x',y')\})\mid (x',y')\in \mathcal{D}\}$.
A \emph{hyper-network} is a network $\mathcal{N}^\#$ with the same architecture and set of parameters, but the domain of the parameters is not $\mathbb{R}$ but rather an abstract domain $\mathcal{A}$.
As standard, we assume the abstract domain corresponds to a lattice and is equipped with a concretization function $\gamma$.
We focus on a non-relational abstraction, where each parameter is abstracted independently.
The motivation is twofold.
First, non-relational domains are computationally lighter than relational domains.
Second, the relation between the parameters is highly complex because it depends on a long series of optimization steps (e.g., SGD steps).
Thus, while it is possible to bound these computations using simpler functions (e.g., polynomial functions), the over-approximation error would be too large to be useful in practice.

Formally, given a set of networks $\mathcal{N}$ with the same architecture and set of parameters
$\mathcal{W}\cup \mathcal{B}$ and given an abstract domain $\mathcal{A}$ and a concretization function $\gamma: \mathcal{A}\rightarrow \mathcal{P}(\mathbb{R})$, a \emph{hyper-network} is a network $\mathcal{N}^\#$ with the same architecture and set of parameters, where the parameters range over $\mathcal{A}$ and satisfy the following:
\begin{equation*}
    \forall N'\in\mathcal{N} : \left[\forall w_{m,k,k'}\in \mathcal{W} : w^{N'}_{m,k,k'}\in\gamma \left(w_{m,k,k'}^\#\right) \wedge \forall b_{m,k}\in \mathcal{B} : b^{N'}_{m,k}\in \gamma \left(b_{m,k}^\#\right)\right]
\end{equation*}
where $w^{N'}_{m,k,k'}$ and $b^{N'}_{m,k}$ are the values of the parameters in the network $N'$ and
$w^\#_{m,k,k'}$ and $b^\#_{m,k}$ are the values of the parameters in the hyper-network $\mathcal{N}^\#$.


 \begin{figure*}[t]
    \centering
  \includegraphics[width=1\linewidth, trim=0 200 0 0, clip, page=3]{figures.pdf}
    \caption{Three networks, $N_1, N_2, N_3$, and their interval hyper-network $\mathcal{N}^\#$.}
    \label{fig:sample}
\end{figure*}

\paragraph{Interval abstraction}
In this work, we focus on the interval domain.
Namely, the abstract elements are intervals $[l,u]$, with the standard meaning: an interval $[l,u]$ abstracts all real numbers between $l$ and $u$.
We thus call our hyper-networks \emph{interval hyper-networks}.
\Cref{fig:sample} shows an example of an interval hyper-network.
An interval corresponding to a weight is shown next to the respective edge, and an interval corresponding to a bias is shown next to the respective neuron.
For example, the neuron $x_{1,2}$ has two weights and a bias whose values are:
$w^\#_{1,2,1}=[1,3]$, $w^\#_{1,2,2}=[1,2]$, and $b^\#_{1,2}=[1,1]$.
Computing an interval hyper-network is straightforward if all concrete networks are known.
However, computing all concrete networks defeats the purpose of having a hyper-network.
Instead, \tool predicts an interval hyper-network with a high probability (\Cref{sec:alg}).

\paragraph{Checking \prop given a hyper-network}
Given an interval hyper-network $\mathcal{N}^\#$ for a network $N$, a neighborhood $I(x)$ and a label $y$, \tool checks whether $N$ is \prop by checking whether $\mathcal{N}^\#$ is locally robust at $I(x)$ with respect to $y$.
If $\mathcal{N}^\#$ is robust, then $N$ is \prop at $I(x)$, with a high probability.
Otherwise, \tool determines that $N$ is not \prop at $I(x)$.
Note that this is a conservative answer since $N$ is either not \prop or that the abstraction or the verification lose too much precision.
\tool verifies local robustness of $\mathcal{N}^\#$ by extending a MILP verifier~\cite{robust_milp} checking local robustness of a neural network (\Cref{sec:verify}).

\section{Sphynx: Safety Privacy Analyzer via Hyper-Networks}
In this section, we present \tool, our system for verifying \propl.
As described, 
it consists of two components, the first component predicts a hyper-network, while the second one verifies \prop.

\subsection{Prediction of an Interval Hyper-Network}
\label{sec:alg}
In this section, we introduce \tool's algorithm for predicting an interval hyper-network, called \alg.
\alg takes as inputs a network $N$, its training set $\mathcal{D}$, its training algorithm $\mathcal{T}$, and a probability error bound $\alpha$, where $\alpha \in (0,1)$ is a small number.
It returns an interval hyper-network $\mathcal{N}^\#$ which, with probability $1-\alpha$ (under certain conditions), abstracts an (unseen) concrete network returned by $\mathcal{T}$ given $N$ and $\mathcal{D}\setminus\{(x,y)\}$, where $(x,y)\in\mathcal{D}$.
The main idea is to predict an abstraction for every network parameter from a small number of concrete networks.
To minimize the number of networks,
\alg executes iterations.
An iteration trains $k$ networks and predicts an interval hyper-network using all trained networks.
If the intervals' distributions have not converged to the expected distributions, another iteration begins.
We next 
provide details.

\paragraph{\alg's algorithm}
\alg (\Cref{alg:gen}) begins by initializing the set of trained networks \texttt{nets} to $N$ and the set of entries \texttt{entr}, whose corresponding networks are in \texttt{nets}, to $\emptyset$.
It initializes the previous hyper-network $\mathcal{N}^\#_{prev}$ to $\bot$ and the current hyper-network $\mathcal{N}^\#_{curr}$ to the interval abstraction of $N$, i.e., the interval of a network parameter $w\in \mathcal{W}\cup\mathcal{B}$ (a weight or a bias) is $[w^N,w^N]$.
Then, while the stopping condition~(\Cref{ln:stop}), checking convergence using the Jaccard distance as described later, is \texttt{false}, it runs an iteration.
An iteration trains $k$ new networks~(\Cref{ln:trbegin}--\Cref{ln:trend}).
A training iteration samples an entry $(x,y)$, adds it to \texttt{entr}, and runs the training algorithm $\mathcal{T}$ on (the architecture of) $N$ and $\mathcal{D}\setminus\{(x,y)\}$.
The trained network is added to \texttt{nets}.
\alg then computes an interval hyper-network from \texttt{nets}~(\Cref{ln:abbegin}--\Cref{ln:abend}).
The computation is executed via \intalg (\Cref{ln:abend}) independently on each network parameter $w$.
\intalg's inputs are all observed values of $w$ and a probability error bound $\alpha'$, which is $\alpha$ divided by the number of 
network parameters.

\begin{algorithm}[t]
\caption{\alg($N$, $\mathcal{D}$, $\mathcal{T}$, $\alpha$)}
\label{alg:gen}
\DontPrintSemicolon
\KwIn{a network $N$, a training set $\mathcal{D}$, a training algorithm $\mathcal{T}$, an error bound $\alpha$.}
\KwOut{an interval hyper-network.}
$\text{nets}\gets\{N\}$\;
$\text{entr}\gets\emptyset$\;
$\mathcal{N}^\#_{prev}\gets\bot$\;
$\mathcal{N}^\#_{curr}\gets N^\#$\;
\While{$\Sigma_{w\in \mathcal{W}\cup\mathcal{B}} \mathds{1}\left[1-J([l_{curr}^w,u_{curr}^w],[l_{prev}^w,u_{prev}^w])\leq R\right] < M\cdot |\mathcal{W}\cup\mathcal{B}|$}{\label{ln:stop}
$\mathcal{N}^\#_{prev}\gets \mathcal{N}^\#_{curr}$\;
\For{$k$ iterations}
{\label{ln:trbegin}
$(x,y)\gets\text{Random}\left(\mathcal{D}\setminus\text{entr}\right)$\;
$\text{entr}\gets\text{entr}\cup\{(x,y)\}$\;
$\text{nets}\gets\text{nets}\cup\left\{\mathcal{T}\left(N,\mathcal{D}\setminus\{(x,y)\}\right)\right\}$\;
}\label{ln:trend}
\For{$w\in\mathcal{W}\cup\mathcal{B}$}
{\label{ln:abbegin}
$\mathcal{V}_w\gets\{w^{N'} \mid N'\in\text{nets}\}$\;
$w^{\mathcal{N}^\#_{curr}}\gets\intalg\left(\mathcal{V}_w,\frac{\alpha}{|\mathcal{W}\cup\mathcal{B}|}\right)$\;
}
}\label{ln:abend}
\Return $\mathcal{N}^\#_{curr}$\;
\end{algorithm}

\paragraph{Interval prediction: overview}
\label{sec:algpredict}
\intalg predicts an interval for a parameter $w$ from a (small) set of values $\mathcal{V}_w=\{w_1,\ldots,w_{K}\}$, obtained from the concrete networks, and given an error bound $\alpha'$.
If it used interval abstraction, $w$ would be mapped to the interval defined by the minimum and maximum in $\mathcal{V}_w$.
However, this approach cannot guarantee to abstract the unseen concrete networks, since we aim to rely on a number of concrete networks significantly smaller than $|\mathcal{D}|$.
Instead, \intalg defines an interval by predicting the minimum and maximum of $w$, over all its values in the (seen and unseen) networks.
There has been an extensive work on estimating statistics, with a high probability, of an unknown probability distribution (e.g., expectation~\cite{valid_adapt,prevent_false,stab_adapt,general_adapt,pres_adapt,adapt_median,emp_adapt,limit_general}).
However, \intalg's goal is to estimate the \emph{minimum} and \emph{maximum} of unknown samples from an unknown probability distribution.
The challenge is that, unlike other statistics, the minimum and maximum are highly sensitive to outliers.
To cope, \intalg transforms the unknown probability distribution of $w$ into a known one and then predicts the minimum and maximum.
Before the transformation, \intalg normalizes the values in $\mathcal{V}_w$.
Overall, \intalg's operation is (\Cref{fig:alg_flow}):
(1)~it normalizes the values in $\mathcal{V}_w$, (2)~it transforms the values into a known probability distribution,
(3)~it predicts the minimum and maximum by computing a confidence interval with a probability of $1-\alpha'$, and (4)~it inverses the transformation and normalization to fit the original scale.
We note that executing normalization and transformation and then their inversion does not result in any information loss, because they are bijective functions.
We next provide details.

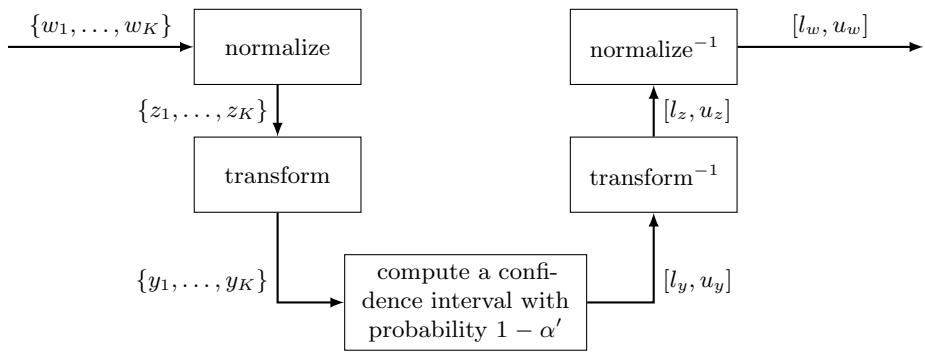
\begin{figure}[t]
    \centering
    \tikzstyle{empty} = [rectangle, minimum width=0cm, minimum height=1cm, text centered, draw=white, fill=white]
\tikzstyle{block} = [rectangle, minimum width=2cm, minimum height=1cm, text centered, text width=2cm, draw=black, fill=white]
\tikzstyle{wblock} = [rectangle, minimum width=3cm, minimum height=1cm, text centered, text width=3cm, draw=black, fill=white]
\tikzstyle{arrow} = [thick,->,>=latex]

\begin{tikzpicture}[node distance=1.7cm]

\node (ws) [empty] { };
\node (norm) [block, right of=ws, xshift=2cm] {normalize};
\node (trans) [block, below of=norm] {transform};
\node (hp) [wblock, below of=trans, xshift=2.5cm] {compute a confidence interval with probability $1-\alpha'$};
\node (invt) [block, above of=hp, xshift=2.5cm] {$\text{transform}^{-1}$};
\node (invn) [block, above of=invt] {$\text{normalize}^{-1}$};
\node (wb) [empty, right of=invn, xshift=2cm] { };

\draw [arrow] (ws) -- node[anchor=south] {$\{w_1,\ldots,w_K\}$} (norm);
\draw [arrow] (norm) -- node[anchor=east] {$\{z_1,\ldots,z_K\}$} (trans);
\draw [arrow] (trans) |- node[anchor=south east]{$\{y_1,\ldots,y_K\}$} (hp);
\draw [arrow] (hp) -| node[anchor=south west]{$[l_y,u_y]$} (invt);
\draw [arrow] (invt) -- node[anchor=west] {$[l_z,u_z]$} (invn);
\draw [arrow] (invn) -- node[anchor=south] {$[l_w,u_w]$} (wb);

\end{tikzpicture}
    \caption{The flow of \intalg for predicting an interval for a network parameter.}
    \label{fig:alg_flow}
\end{figure}

\paragraph{Transformation and normalization}
\intalg transforms the values in $\mathcal{V}_w$ to make them seem as if drawn from a known probability distribution.
It employs the Yeo-Johnson transformation~\cite{transform}, transforming an unknown distribution into a Gaussian distribution.
This transformation has the flexibility that the input random variables can have any real value.
It has a parameter $\lambda$, whose value is determined using maximum likelihood estimation (MLE) (i.e., $\lambda$ maximizes the likelihood that the transformed data is Gaussian).
It is defined as follows:
\[
    T_\lambda(z) =
\left\{
\begin{array}{lrlr}
    \frac{(1+z)^\lambda-1}{\lambda}, & \lambda\neq0,z\geq0; \phantom{check111} &
    \log(1+z), & \lambda=0,z\geq0 \\
    -\frac{(1-z)^{2-\lambda}-1}{2-\lambda}, & \phantom{11} \lambda\neq2,z<0; \phantom{check111}&
    -\log(1-z), & \phantom{11}  \lambda=2,z<0
    \end{array}
    \right\}
\]
\intalg employs several adaptations to this transformation to fit our setting better.
First, it requires $\lambda\in [0,2]$.
This is because if $\lambda<0$, the transformed outputs have an upper bound $\frac{1}{-\lambda}$
and if $\lambda>2$, they have a lower bound $-\frac{1}{\lambda-2}$.
Since our goal is to predict the minimum and maximum values, we do not want the transformed outputs to artificially limit them.
Second, the Yeo-Johnson transformation transforms into a Gaussian distribution.
Instead, we transform to a distribution given by kernel density estimation (KDE) with a Laplace kernel, which is better suited for our setting~(\Cref{sec:eval}).
We describe KDE in
\ifthenelse{\boolean{shortver}}{the extended version~\cite{}.}{\Cref{app:kde}.}
Third, the Yeo-Johnson transformation operates best when given values centered around zero.
However, training algorithms produce parameters of different scales and centered around different values.
Thus, before the transformation, \intalg normalizes the values in $\mathcal{V}_w$ to be centered around zero and have a similar scale as follows: $z_i\gets\frac{w_i-\mu}{\Delta}$, $\forall i\in [K]$.
Consequently, \intalg is invariant to translation and scaling and is thus more robust to the computations of the training algorithm~$\mathcal{T}$.
There are several possibilities to define the normalization's parameters, $\mu$ and $\Delta$, each is based on a different norm, e.g., the $L_1$, $L_2$ or $L_\infty$ norm.
In our setting, a normalization based on the $L_1$ norm works best, since we use a Laplace kernel.
Its center point is $\mu=\text{median}(w_1,\ldots,w_K)$ and its scale is the centered absolute first moment: $\Delta=\frac{1}{K}\sum_{i=1}^K|w_i-\mu|$.

\paragraph{Interval prediction}
After the transformation, \intalg has a cumulative distribution function (CDF) for the transformed values: $F_{y}(v)=\mathbb{P}\{y\leq v\}$.
Given the CDF, we compute a confidence interval, defining the minimum and maximum.
A confidence interval, parameterized by 
$\alpha'\in(0,1)$, is an interval satisfying that the probability of an unknown sample being inside it is at least $1-\alpha'$.
It is defined by: $[l_y,u_y]=\left[F_y^{-1}\left(\frac{\alpha'}{2}\right),F_y^{-1}\left(1-\frac{\alpha'}{2}\right)\right]$.
Since we wish to compute an interval hyper-network $\mathcal{N}^\#$ abstracting an unseen network with probability $1-\alpha$ and since there are $|\mathcal{W}\cup \mathcal{B}|$ parameters, we choose $\alpha'=\frac{\alpha}{|\mathcal{W}\cup \mathcal{B}|}$ (\Cref{ln:abend}).
By the union bound, we obtain confidence intervals guaranteeing that the probability that an unseen concrete network is covered by the hyper-network is at least $1-\alpha$.

\paragraph{Stopping condition}
The goal of \alg's stopping condition is to identify when the distributions of the intervals computed for $\mathcal{N}^\#_{curr}$ have converged to their expected distributions.
This is the case when the intervals have not changed significantly in $\mathcal{N}^\#_{curr}$ compared to $\mathcal{N}^\#_{prev}$.
Formally, for each network parameter $w$, it compares the current interval to the previous interval by computing their Jaccard distance.
Given the previous and current intervals of $w$, $I_{prev}$ and $I_{curr}$, the Jaccard Index is:
$J(I_{curr},I_{prev})=\frac{I_{curr}\sqcap I_{prev}}{I_{curr}\sqcup I_{prev}}$.
For any two intervals, the Jaccard distance $1-J(I_{curr},I_{prev})\in[0,1]$, such that the smaller the distance, the more similar the intervals are.
If the Jaccard distance is below a ratio $R$ (a small number), we consider the interval of $w$ as converged to the expected CDF.
If at least $M\cdot100\%$ of the hyper-network's intervals have converged, we consider that the hyper-network has converged, and thus \alg terminates.

\subsection{Verification of a Hyper-Network}
\label{sec:verify}

In this section, we explain how \tool verifies that an interval hyper-network is locally robust, in order to show that the network is \prop.
Our verification extends a local robustness verifier~\cite{robust_milp}, designed for checking local robustness of a (concrete) network, to check local robustness given an interval hyper-network.
This verifier encodes the verification task as a mixed-integer linear program (MILP), which is submitted to a MILP solver, and provides a sound and complete analysis. We note that we experimented with extending incomplete analysis techniques to our setting (interval analysis and DeepPoly~\cite{abst_cert}) to obtain a faster analysis. However, the over-approximation error stemming both from these techniques and our hyper-network led to a low precision rate.
The challenge with extending the verifier by Tjeng et al.~\cite{robust_milp} is that both the neuron values and the network parameters are variables, leading to quadratic constraints, which are computationally heavy for constraint solvers.
Instead, \tool relaxes the quadratic constraints.
To mitigate an over-approximation error, we propose two approaches.
We begin with a short background and then describe our extension.

\paragraph{Background}
The MILP encoding by Tjeng et al.~\cite{robust_milp} precisely captures the neurons' affine computation as linear constraints.
The ReLU computations are more involved, because ReLU is non-linear.
For each ReLU computation, a boolean variable is introduced along with four linear constraints.
The neighborhood is expressed by constraining each input value by an interval, and the local robustness check is expressed by a linear constraint over the network's output neurons.
This encoding has been shown to be sound and complete.
To scale the analysis, every neuron is associated with a real-valued interval.
This allows to identify ReLU neurons whose computation is linear, which is the case if the input's interval is either non-negative or non-positive.
In this case, the boolean variable is not introduced for this ReLU computation and the encoding's complexity decreases.

\paragraph{Extension to hyper-networks}
To extend this verifier to analyze hyper-networks, we encode the affine computations differently because network parameters are associated with intervals and not real numbers.
A naive extension replaces the parameter values in the linear constraints by variables, which are bounded by intervals.
However, this leads to quadratic constraints and significantly increases the problem's complexity.
To keep the constraints linear, one option is to introduce a fresh variable for every multiplication of a neuron's input and a network parameter.
However, such variable would be bounded by the interval abstracting the multiplication of the two variables, which may lead to a very large over-approximation error.
Instead, we rely on the following observation: if the input to every affine computation is non-negative, then the abstraction of the multiplication does not introduce an over-approximation error.
This allows us to replace the constraint of each affine variable $\hat{x}$ (defined in~\Cref{sec:prelim}), previously captured by an equality constraint, with two inequalities providing a lower and upper bound on $\hat{x}$.
Formally, given lower and upper bounds of the weights and biases at the matrices $l_W$ and $u_W$ and the vectors $l_b$ and $u_b$, the affine variable $\hat{x}$ is bounded by:
$$l_W\cdot x+l_b\leq\hat{x}\leq u_W\cdot x+u_b$$
To guarantee that the input to every affine computation $x$ is non-negative our analysis requires (1)~preprocessing of the network's inputs and (2)~a non-negative lower bound to every activation function's output.
The second requirement holds since the MILP encoding of ReLU explicitly requires a non-negative output.
If preprocessing is inapplicable (i.e., $x\in\mathcal{X}\nsubseteq[0,1]^d$) or the activation function may be negative (e.g., leaky ReLU), we propose another approach to mitigate the precision loss, given a lower bound $l_x\leq x$.
Given lower and upper bounds for the weights and biases $l_W$, $u_W$, $l_b$, and $u_b$, we can bound the output by:
$$l_W\cdot x+l_b-(u_W-l_W)\cdot\max(0,-l_x)\leq\hat{x}\leq u_W\cdot x+u_b+(u_W-l_W)\cdot\max(0,-l_x)$$
We provide a proof in 
\ifthenelse{\boolean{shortver}}{the extended version~\cite{}.}{\Cref{sec:proofs}.}
Note that each bound is penalized by $(u_W-l_W)\cdot\max(0,-l_x)\geq0$, which is an over-approximation error term for the case where the lower bound $l_x$ is negative. 


\subsection{Analysis of Sphynx}
\label{sec:algcorr}
In this section, we discuss the correctness of \tool and its running time. Proofs are provided in 
\ifthenelse{\boolean{shortver}}{the extended version~\cite{}.}{\Cref{sec:proofs}.}

\paragraph{Correctness}
Our first lemma states the conditions under which \intalg computes an abstraction for the values of a single network parameter with a high probability. 
\begin{restatable}[]{lemma}{ftc}
\label{lem}
Given a parameter $w$ and an error bound $\alpha'$, if the observed values $w_1,\ldots,w_K$ are IID and suffice to predict the correct distribution, 
and if there exists $\lambda\in[0,2]$ such that the distribution of the transformed normalized values $y_1,\ldots,y_K$ is similar to a distribution given by KDE with a Laplace kernel \ifthenelse{\boolean{shortver}}{}{(using the bandwidth defined in~\Cref{app:kde})}, then \intalg computes a confidence interval containing an unseen value $w_i$, for $i\in\{K+1,\ldots,|\mathcal{D}|+1\}$, 
with a probability of $1-\alpha'$.
 \end{restatable}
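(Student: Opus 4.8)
The plan is to unwind the construction of \intalg step by step and track the probabilistic guarantee through each transformation. Recall that \intalg takes the observed values $w_1,\dots,w_K$, normalizes them to $z_i = (w_i-\mu)/\Delta$, applies the Yeo-Johnson transformation $T_\lambda$ with $\lambda\in[0,2]$ to obtain $y_i = T_\lambda(z_i)$, estimates the CDF $F_y$ of the transformed population via KDE with a Laplace kernel, forms the confidence interval $[l_y,u_y] = [F_y^{-1}(\alpha'/2),\, F_y^{-1}(1-\alpha'/2)]$, and finally maps this interval back through $T_\lambda^{-1}$ and the inverse normalization to produce $[l_w,u_w]$.

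\textbf{Key steps.} First I would observe that, under the hypothesis that the transformed normalized values are distributed (to the required accuracy) according to the KDE-given law with CDF $F_y$, the interval $[l_y,u_y] = [F_y^{-1}(\alpha'/2), F_y^{-1}(1-\alpha'/2)]$ satisfies $\mathbb{P}\{y\in[l_y,u_y]\} = F_y(u_y)-F_y(l_y) = (1-\alpha'/2)-\alpha'/2 = 1-\alpha'$ by the definition of a quantile; since the hypothesis gives that the $K$ observations suffice to identify the correct distribution, $F_y$ is (up to the stated approximation) the true CDF, so for an unseen index $i\in\{K+1,\dots,|\mathcal{D}|+1\}$ we have $y_i \sim F_y$ by the IID assumption, hence $\mathbb{P}\{y_i\in[l_y,u_y]\}\ge 1-\alpha'$. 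Second, I would argue that the pullback preserves membership: because normalization $z\mapsto(w-\mu)/\Delta$ with $\Delta>0$ and the Yeo-Johnson map $T_\lambda$ (restricted to $\lambda\in[0,2]$) are both strictly increasing bijections on their domains, we have $y_i\in[l_y,u_y]$ if and only if $z_i\in[T_\lambda^{-1}(l_y),T_\lambda^{-1}(l_y)]$\,---\,wait, I mean $z_i\in[T_\lambda^{-1}(l_y),T_\lambda^{-1}(u_y)] = [l_z,u_z]$\,---\,if and only if $w_i\in[\Delta l_z+\mu,\Delta u_z+\mu]=[l_w,u_w]$. Here the restriction $\lambda\in[0,2]$ is essential: it guarantees $T_\lambda$ is a bijection from all of $\mathbb{R}$ onto all of $\mathbb{R}$ (the excerpt already notes that $\lambda<0$ or $\lambda>2$ would make the range bounded, breaking surjectivity and hence the well-definedness of $T_\lambda^{-1}$ on $[l_y,u_y]$). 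Chaining these equivalences, $\mathbb{P}\{w_i\in[l_w,u_w]\} = \mathbb{P}\{y_i\in[l_y,u_y]\}\ge 1-\alpha'$, which is exactly the claim.

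\textbf{Main obstacle.} The routine part is the bijection/monotonicity bookkeeping. The substantive point\,---\,and where the "if" hypotheses of the lemma are doing all the work\,---\,is the passage from "the empirical sample matches the KDE law" to "the unseen sample is drawn from that same law." Strictly, the lemma is stated conditionally ("if the observed values suffice to predict the correct distribution, and if there exists $\lambda\in[0,2]$ such that the transformed distribution is similar to a KDE-Laplace law"), so I would not attempt to prove these conditions hold; instead I would state explicitly that under these assumptions the transformed population CDF is $F_y$, invoke the IID assumption to conclude every $y_i$ (seen or unseen) has marginal $F_y$, and then the quantile computation is exact. I would also add a remark that the "similar to" wording means the argument is up to the approximation error of the KDE fit, so the conclusion inherits the same qualification\,---\,the confidence level is $1-\alpha'$ exactly for the fitted law and approximately for the true one. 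The one place a careful reader might object is whether $F_y$, being a KDE with a Laplace kernel, is continuous and strictly increasing so that $F_y^{-1}$ at $\alpha'/2$ and $1-\alpha'/2$ is unambiguous; this is immediate since a Laplace-kernel KDE has a strictly positive density on all of $\mathbb{R}$, so I would note it in passing and move on.
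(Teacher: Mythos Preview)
Your proposal is correct and follows essentially the same line as the paper's own proof: reduce from $w$-space to $y$-space via the bijectivity of the normalization and the Yeo-Johnson transform, then invoke the confidence-interval definition at the $y$-level under the stated distributional hypotheses. If anything, your version is slightly more careful, making explicit the strict monotonicity needed to preserve interval membership and the well-definedness of $F_y^{-1}$ for a Laplace-kernel KDE; the paper simply says ``bijective'' and leaves these points implicit.
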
   

Note that our lemma does not make any assumption about the distribution of the observed values $w_1,\ldots,w_K$.
Next, we state our theorem pertaining the correctness of \alg. The theorem states that when 
the stopping condition identifies correctly when the observed values have converged to the correct distribution, then the hyper-network abstracts an unseen network with the expected probability.
\begin{restatable}[]{theorem}{fta}
  Given a network $N$, its training set $\mathcal{D}$, its training algorithm $\mathcal{T}$, and an error bound $\alpha$, if 
  $R$ is close to $0$ and $M$ is close to $1$,
    then \alg returns a hyper-network abstracting an unseen network with probability $1-\alpha$.
\end{restatable}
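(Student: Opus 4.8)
The plan is to reduce the theorem to \Cref{lem} via a union bound, by first isolating the probabilistic content from the two hypotheses stated in the theorem ($R$ close to $0$, $M$ close to $1$). First I would unpack what these hypotheses buy us: the stopping condition compares, parameter by parameter, the Jaccard distance between the current interval and the previous interval, and declares convergence once at least an $M$-fraction of parameters have distance below $R$. The intended reading — which I would state as the precise assumption carried by ``$R$ close to $0$ and $M$ close to $1$'' — is that when \alg terminates, for \emph{every} network parameter $w$ the observed values $w_1,\dots,w_K$ have converged to the correct underlying distribution, i.e.\ they satisfy the two preconditions of \Cref{lem} (the IID-and-sufficient-to-predict-the-distribution condition, and the existence of a suitable $\lambda\in[0,2]$ making the transformed normalized values match a Laplace-kernel KDE). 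Intuitively, small $R$ forces the empirical CDFs to have stabilized, and $M$ near $1$ forces this to hold across essentially all parameters; I would make this the formal bridge between the heuristic stopping rule and the clean hypotheses of the lemma.

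Next I would apply \Cref{lem} to each of the $|\mathcal{W}\cup\mathcal{B}|$ parameters individually, with the per-parameter error bound $\alpha' = \alpha/|\mathcal{W}\cup\mathcal{B}|$ that \alg passes to \intalg on \Cref{ln:abend}. The lemma gives, for each $w$, that the predicted interval $[l^w_{curr},u^w_{curr}] = w^{\mathcal{N}^\#_{curr}}$ contains the value of $w$ in an unseen network $N_i = \mathcal{T}(N,\mathcal{D}\setminus\{(x_i,y_i)\})$ with probability at least $1-\alpha'$. Equivalently, the probability that $w^{N_i}\notin\gamma(w^{\mathcal{N}^\#_{curr}})$ is at most $\alpha'$. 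Taking the union bound over all parameters, the probability that \emph{some} parameter of $N_i$ escapes its predicted interval is at most $|\mathcal{W}\cup\mathcal{B}|\cdot\alpha' = \alpha$. By the definition of a hyper-network (the displayed conjunction over all $w_{m,k,k'}$ and $b_{m,k}$ in \Cref{sec:split}), the complementary event — every weight and every bias of $N_i$ lies in its predicted interval — is exactly the event that $\mathcal{N}^\#_{curr}$ abstracts $N_i$. Hence $\mathcal{N}^\#_{curr}$ abstracts the unseen network with probability at least $1-\alpha$, which is the claim.

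I would also remark on two small points to keep the argument honest. First, the IID assumption inside \Cref{lem} and the union bound interact cleanly because the union bound needs no independence \emph{across} parameters — only the per-parameter guarantees — so the (strong) within-parameter IID hypothesis is the only probabilistic input actually used. Second, the network $N$ itself (trained on the full dataset $\mathcal{D}$) is always included in \texttt{nets} and hence trivially abstracted, so condition (1) of \Cref{def:ldcp} is handled for free and the theorem's content is genuinely about the leave-one-out networks.

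The main obstacle is the first paragraph: the theorem as stated leans on the informal phrases ``$R$ close to $0$'' and ``$M$ close to $1$'', and the honest version of the proof must pin these down as the exact statement that \alg's termination implies the hypotheses of \Cref{lem} hold for all parameters. There is no way to \emph{derive} convergence-of-the-empirical-distribution purely from a small Jaccard distance between two consecutive iterates — a sequence of intervals can stall briefly and then move again — so this step is necessarily an assumption rather than a deduction, and the proof should present it as the precise interpretation of the hypotheses rather than pretend to prove it. Everything downstream (the per-parameter application of \Cref{lem} and the union bound over $|\mathcal{W}\cup\mathcal{B}|$ parameters) is then routine.
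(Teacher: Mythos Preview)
Your proposal is correct and follows essentially the same approach as the paper: verify the hypotheses of \Cref{lem} for each parameter with $\alpha'=\alpha/|\mathcal{W}\cup\mathcal{B}|$, then take a union bound over all $|\mathcal{W}\cup\mathcal{B}|$ parameters. The only minor organizational difference is that the paper justifies the IID hypothesis directly from the independence of the leave-one-out training runs (each $w_i$ arises from training on $\mathcal{D}$ minus a uniformly sampled entry), rather than folding it into the stopping-condition interpretation as you do; the sufficiency and $\lambda$ conditions are then handled, much as you anticipated, by the stopping condition and a ``holds in practice'' remark respectively.
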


%

Our next theorem states that our verifier is sound and states when it is complete.
Completeness means that if the hyper-network is locally robust at the given neighborhood, \tool is able to prove it.
\begin{restatable}[]{theorem}{ftb}
\label{thm2} 
Our extension to the MILP verifier provides a sound analysis.
It is also complete if all inputs to the affine computations are non-negative.
\end{restatable}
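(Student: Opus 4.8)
The plan is to establish soundness and then conditional completeness, building directly on the known soundness and completeness of the Tjeng et al.~\cite{robust_milp} MILP encoding for a single concrete network. The only place our encoding differs from the original is in the constraints capturing each affine pre-activation variable $\hat{x}$: the original equality $\hat{x} = W\cdot x + b$ is replaced by the two inequalities $l_W\cdot x + l_b \leq \hat{x} \leq u_W\cdot x + u_b$ (or, in the second approach, the same inequalities penalized by the error term $(u_W - l_W)\cdot\max(0,-l_x)$). Everything else — the ReLU big-$M$ encoding with its boolean variables and four linear constraints, the neighborhood constraints on the input neurons, and the output constraint encoding the robustness check — is inherited verbatim. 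So the argument reduces to a statement about these replaced constraints.

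For \textbf{soundness}, I would argue that the feasible set of our MILP contains the feasible set of the concrete MILP for every network $N'$ in the abstracted set $\mathcal{N}$. Fix any $N'\in\mathcal{N}$ and any $x'\in I(x)$; by the hyper-network property (the displayed inclusion in Section~\ref{sec:split}), every concrete parameter of $N'$ lies in its corresponding interval, so $l_W \leq W^{N'} \leq u_W$ entrywise and likewise for the biases. Then I would show, layer by layer by induction, that the concrete execution of $N'$ on $x'$ yields a value of $\hat{x}$ that satisfies our inequality constraints. The key sub-claim is monotonicity: when the layer input $x$ is entrywise non-negative, $l_W\cdot x + l_b \leq W^{N'}\cdot x + b^{N'} \leq u_W\cdot x + u_b$ holds termwise; the inductive hypothesis that ReLU outputs are non-negative (and that preprocessing makes the network input non-negative) supplies exactly this. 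In the second approach, where $x$ may be negative but $l_x \leq x$, I would verify the weaker bound $l_W\cdot x + l_b - (u_W-l_W)\max(0,-l_x) \leq W^{N'}\cdot x + b^{N'}$ by splitting each coordinate of $x$ into its non-negative and negative parts and bounding the contribution of the negative part by $(u_W - l_W)\max(0,-l_x)$ — this is the computation whose details are deferred to the appendix and which I would just cite here. Consequently every concrete robustness counterexample of every $N'$ is a feasible point of our MILP, so if our MILP reports "robust" (infeasibility of the violation constraint), then all $N'$ are robust, hence $N$ is \prop; this is precisely soundness.

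For \textbf{completeness under the non-negativity hypothesis}, I would show the converse inclusion collapses: when every input to every affine computation is guaranteed non-negative, the penalty term $(u_W - l_W)\max(0,-l_x)$ vanishes, and — crucially — the interval abstraction of the product $W\cdot x$ over $W\in[l_W,u_W]$, $x\geq 0$ incurs \emph{no} over-approximation, because $\{W\cdot x : l_W \leq W \leq u_W\} = [l_W\cdot x,\, u_W\cdot x]$ exactly for non-negative $x$ (this is the observation flagged in Section~\ref{sec:verify}). Thus a feasible point of our MILP with affine value $\hat{x}$ corresponds to \emph{some} choice of parameters in the intervals realizing that $\hat{x}$ — i.e., to an execution of some network in $\gamma$ of the hyper-network — so our MILP is feasible iff some abstracted network has a counterexample; combined with completeness of the underlying encoding for each such fixed network, if the hyper-network is locally robust then our MILP correctly certifies it. The main obstacle I anticipate is making the "feasible point corresponds to an admissible parameter choice" direction fully precise: one must handle all neurons in a layer simultaneously (different neurons may need different parameter realizations, but since parameters are abstracted independently and non-relationally this is fine) and carry the ReLU boolean choices consistently through the induction — essentially re-deriving the exactness of the product abstraction neuron-by-neuron and checking it meshes with the inherited ReLU encoding. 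I expect no genuine difficulty beyond careful bookkeeping, since the non-negativity hypothesis is exactly what is needed to kill the only source of imprecision.
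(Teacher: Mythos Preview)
Your proposal is correct and follows essentially the same approach as the paper: both reduce the argument to the affine constraints (everything else being inherited verbatim from~\cite{robust_milp}), establish soundness via the monotonicity $l_W\cdot x + l_b \le W\cdot x + b \le u_W\cdot x + u_b$ when $x\ge 0$, handle the possibly-negative case by the same $\max(0,l_x)/\max(0,-l_x)$ decomposition, and obtain completeness from tightness of that interval when $x\ge 0$. Your write-up is somewhat more explicit about the layer-by-layer induction and about why the non-relational abstraction lets different neurons realize different parameter choices in the completeness direction, but these are elaborations of the same argument rather than a different route.
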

By \Cref{thm2} and \Cref{def:ldcp}, if \tool determines that the hyper-network is locally robust, then the network is \prop.
Note that it may happen that the network is \prop, but the hyper-network is not locally robust due to the abstraction's precision loss.

\paragraph{Running time}
The running time of \tool consists of the running time of \alg and the running time of the verifier.
The running time of \alg consists of the networks' training time and the time to compute the hyper-networks (the running time of the stopping condition is negligible).
The training time is the product of the number of networks and the execution time of the training algorithm $\mathcal{T}$.
The time complexity of \intalg is $O(K^2)$, where $K=|\texttt{nets}|$, and thus the computation of a hyper-network is: $O\left(K^2\cdot|\mathcal{W}\cup\mathcal{B}|\right)$.
Since \alg runs $\frac{K}{k}$ iterations, overall, the running time is $O\left(|\mathcal{T}|\cdot K + \frac{K^3}{k}\cdot|\mathcal{W}\cup\mathcal{B}|\right)$.
In practice, the second term is negligible compared to the first term ($|\mathcal{T}|\cdot K$).
Thus, the fewer trained networks the faster \alg is.
The running time of the verifier is similar to the running time of the MILP verifier~\cite{robust_milp}, verifying local robustness of a single network, which is exponential in the number of ReLU neurons
whose computation is non-linear (their input's interval contains negative and positive numbers).
Namely, \tool reduces the verification time by a factor of $|\mathcal{D}|+1$ compared to the naive algorithm that verifies robustness network-by-network.

\section{Evaluation}
\label{sec:eval}


We implemented \tool in Python\footnote{Code is at: \url{https://github.com/Robgy/Verification-of-Neural-Networks-Privacy}}.
Experiments ran on an Ubuntu 20.04 OS on a dual AMD EPYC 7713 server with 2TB RAM and $8$ NVIDIA A100 GPUs.
The hyper-parameters of \alg are: $\alpha=0.1$, the number of trained networks in every iteration is $k=400$, the thresholds of the stopping condition are $M=0.9$ and $R=0.1$.
We evaluate \tool on the three data-sensitive datasets described in~\Cref{sec:prelim}: Adult Census~\cite{uci} (Adult), Bank Marketing~\cite{bank_market} (Bank), and Default of Credit Card Clients~\cite{credit_card} (Credit).
We preprocessed the input values to range over $[0,1]$ as follows.
Continuous attributes were normalized to range over $[0,1]$ and categorical attributes were transformed into two features ranging over $[0,1]$: $\cos{\left(\frac{\pi}{2}\cdot\frac{i}{m-1}\right)}$ and $\sin{\left(\frac{\pi}{2}\cdot\frac{i}{m-1}\right)}$, where $m$ is the number of categories and $i$ is the category's index $i\in\{0,\ldots,m-1\}$.
Binary attributes were transformed with a single feature: 0 for the first category and 1 for the second one.
While one hot encoding is highly popular for categorical attributes, it has also been linked to reduced model accuracy when the number of categories is high~\cite{onehot,RodriguezBGE18}. However, we note that the encoding is orthogonal to our algorithm.
We consider three fully-connected networks: $2\times50$, $2\times100$, and $4\times50$, where the first number is the number of intermediate layers and the second one is the number of neurons in each intermediate layer.
Our network sizes are comparable to or larger than the sizes of the networks analyzed by verifiers targeting these datasets~\cite{paral_fair,redu_fair,BaharloueiNBR20}.
All networks reached around $80\%$ accuracy.
The networks were trained over 10 epochs, using SGD with a batch size of 1024 and a learning rate of 0.1.
We used $L_1$ regularization with a coefficient of $10^{-5}$.
All networks were trained with the same random seed, so \tool can identify the maximal privacy leakage (allowing different seeds may reduce the privacy leakage since it can be viewed as adding noise to the training process).
We remind that \tool's challenge is intensified compared to local robustness verifiers: 
their goal is to prove robustness of a single network, whereas \tool's goal is to prove that privacy is preserved over a very large number of concrete networks: 32562 networks for Adult, 31649 networks for Bank and 21001 networks for Credit.
Namely, the number of parameters that \tool reasons about is 
the number of parameters of all $|\mathcal{D}|+1$ concrete networks.
Every experiment is repeated 100 times, for every network, where each experiment randomly chooses dataset entries and trains their respective networks.

\paragraph{Performance of \tool}
We begin by evaluating \tool's ability to verify \prop.
We consider three kinds of neighborhoods, each is defined given an input $x$, and the goal is to prove that all inputs in the neighborhood are classified as a label $y$:
\begin{enumerate}[nosep,nolistsep]
  \item \emph{Membership}, $I(x)=\left\{x\right\}$: safety to label-only membership attacks~\cite{label_mia,member_label,Monerale23}.
  \item \emph{DP-Robustness}, $I^\infty_\epsilon(x)=\left\{x'\mid\|x'-x\|_\infty \leq \epsilon\right\}$, where $\epsilon=0.05$: 
 safety to adversarial example attacks in a DP setting (similarly to~\cite{LecuyerAG0J19,PhanTHJSD20}).
  \item \emph{Sensitivity}, $I^S_i(x)=\left\{x'\mid\bigwedge_{j\in [d]\setminus\{i\}} x'_j=x_j\right\}$, where the sensitive feature is $i=$\emph{sex} for Adult and Credit and $i=$\emph{age} for Bank: safety to attacks revealing sensitive attributes (like~\cite{model_inv,FredriksonLJLPR14}). We note that sensitivity is also known as \emph{individual fairness}~\cite{verify_fair}.
\end{enumerate} 
For each dataset, we pick $100$ inputs for each of these neighborhoods.
We compare \tool to the naive but most accurate algorithm that trains all concrete networks and verifies the neighborhoods' robustness network-by-network.
Its verifier is the MILP verifier~\cite{robust_milp} on which 
\tool's verifier builds.
We let both algorithms run on all networks and neighborhoods.
\Cref{tab:rq3-new} reports the confusion matrix of \tool compared to the ground truth (computed by the naive algorithm): 
\begin{itemize}[nosep,nolistsep]
  \item True Positive (TP): the number of neighborhoods that are \prop and that \tool returns they are \prop.
 \item True Negative (TN): the number of neighborhoods that are not \prop and \tool returns they are not \prop.
 \item False Positive (FP): the number of neighborhoods that are not \prop and \tool returns they are \prop.
 A false positive may happen because of the probabilistic abstraction which may miss concrete networks that are not locally robust at the given neighborhood.
 \item False Negative (FN): the number of neighborhoods that are \prop but \tool returns they are not \prop.
 A false negative may happen because the hyper-network may abstract spurious networks that are not locally robust at the given neighborhood (an over-approximation error).
\end{itemize}
Results show that \tool's average accuracy is 93.3\% (TP+TN).
The FP rate is 1.1\% and at most 9\%.
The FN rate (i.e., the over-approximation error) is 5.5\%.
Results further show how private the different networks are.
All networks are very safe to label-only membership attacks.
Although \tool has several false negative results, it still allows the user to infer that the networks are very safe to such attack.
For DP-Robustness,
results show that some networks are fairly robust (e.g., Bank $2\times 50$ and $2\times 100$), while others are not (e.g., Bank $4\times 50$).
For Sensitivity, results show that \tool enables the user to infer what networks are sensitive to the sex/age attribute (e.g., Credit $4\times 50$) and what networks are not (e.g., Credit $2\times 100$).
An important characteristic of \tool's accuracy is that the false positive and false negative rates do not lead to inferring a wrong conclusion.
For example, if the network is DP-robust, \tool proves \prop (TP+FP) for significantly more DP-robustness neighborhoods than the number of DP-robustness neighborhoods for which it does not prove \prop (TN+FN).
Similarly, if the network is not DP-robust, \tool determines for significantly more DP-robustness neighborhoods that they are not \prop (TN+FN) than the number of DP-robustness neighborhoods that it proves they are \prop (TP+FP).

\begin{table}[t]
\small
\begin{center}
\caption{\tool's confusion matrix.}
\begin{tabular}{l@{\hspace{0.2cm}}c@{\hspace{0.4cm}}r@{\hspace{0.2cm}}r@{\hspace{0.2cm}}r@{\hspace{0.2cm}}r@{\hspace{0.4cm}}r@{\hspace{0.2cm}}r@{\hspace{0.2cm}}r@{\hspace{0.2cm}}r@{\hspace{0.4cm}}r@{\hspace{0.2cm}}r@{\hspace{0.2cm}}r@{\hspace{0.2cm}}r}
\toprule
Dataset & Network & \multicolumn{4}{c}{Membership} & \multicolumn{4}{c}{DP-Robustness} & \multicolumn{4}{c}{Sensitivity} \\

      &              &   TP & TN   &     FP &    FN & TP    & TN   & FP       & FN    & TP   & TN  & FP    & FN  \\
\midrule
Adult & $2\times50$  &  93    & 0    &  0     & 7  &  75   & 21    & 0    & 4    &  85      & 10  & 0   &  5  \\
 & $2\times100$      &  82    & 1    & 0      & 17 &  54   & 39    & 0    & 7    &  75      & 10  & 0   & 15  \\
 & $4\times50$       &  93    & 0    & 0      & 7  &  11   & 86    & 3    & 0    &   1      & 97  & 1   & 1   \\
\midrule 

Bank & $2\times50$   &  100   & 0    & 0     & 0     & 100   & 0      & 0 & 0    & 100   & 0     &  0     & 0  \\
 & $2\times100$      &  99    & 0    & 0     & 1     & 98    &  1     & 0 & 1    & 99    & 0     & 0      & 1 \\
 & $4\times50$       &  81    & 0    & 0     & 19    & 22    &  62    & 9 & 7    & 2     & 71    & 8      & 19 \\
\midrule

Credit & $2\times50$ &  93    & 0   & 0      & 7   &  91    & 0     & 0     & 9    & 92   & 0   &  0     & 8  \\
 & $2\times100$      &  100   & 0   & 0      & 0   &  91    & 2     & 2     & 5    & 100  & 0   &  0     & 0  \\
 & $4\times50$       &  91    & 0   & 0      & 9   &  2     & 95    & 3     & 0    & 0    & 96   &   4  & 0  \\
\bottomrule
\end{tabular}
    \label{tab:rq3-new}
    \end{center}
\end{table}

%
\Cref{tab:traintime} compares the execution time of \tool to the naive algorithm.
It shows the number of trained networks (which is the size of the dataset plus one for the naive algorithm, and $K=|nets|$ for \tool), the overall training time on a single GPU in hours and the verification time of a single neighborhood (in hours for the naive algorithm, and in \emph{seconds} for \tool).
Results show the two strengths of \tool: (1)~it reduces the training time by 13.6x, because it requires to train only 7\% of the networks and (2)~it reduces the verification time by $1.7\cdot10^4$x.
Namely, \tool reduces the execution time by four orders of magnitude compared to the naive algorithm.
The cost is the minor decrease in \tool's accuracy ($<$7\%).
That is, \tool trades off precision with scalability, like many local robustness verifiers do~\cite{veri_nlin,opt_abst,form_safe,bound_prop,cnn_cert,safe_abst,abst_cert,convex_verify,beyond_single,poly_gpu}.

\begin{table}[t]
\small
\begin{center}
\caption{Training and verification time of \tool and the naive algorithm.}
\begin{tabular}{l@{\hspace{0.2cm}}l@{\hspace{0.4cm}}c@{\hspace{0.3cm}}c@{\hspace{0.5cm}}c@{\hspace{0.4cm}}c@{\hspace{0.8cm}}c@{\hspace{0.3cm}}c@{\hspace{0.5cm}}}
\toprule
Dataset & Network &  \multicolumn{2}{l}{Trained networks} & \multicolumn{2}{l}{GPU training time}& \multicolumn{2}{l}{Verification time}\\
      &               & naive& \tool  &  naive & \tool & naive  & \tool  \\
      &               & $|\mathcal{D}|+1$ & $K$  & hours &  hours   &       hours & seconds \\

\midrule
Adult & $2\times50$   &   32562& 2436 & 44.64     &     3.33      &  2.73 & 0.35\\
       & $2\times100$  &    32562 & 2024 &  46.56     &     2.89     & 5.24 & 0.72 \\
      & $4\times50$   &    32562& 1464 & 52.37     &     2.35      &  0.33 &  0.87\\
\midrule
Bank  & $2\times50$   &     31649& 2364  & 41.04     &     3.06      & 2.73& 0.35\\
       & $2\times100$  & 31649 & 2536 &    41.76     &     3.34      & 5.60&  0.69\\
      & $4\times50$    & 31649& 1996 &    49.41     &     3.11       & 1.3 & 1.19\\
\midrule
Credit & $2\times50$  &   21001& 2724 & 18.72     &     2.42      &  2.1& 0.35\\
          & $2\times100$ &   21001  & 2234  & 19.21     &     2.04    & 3.6& 0.64 \\
       & $4\times50$  &    21001& 1816 &22.67     &     1.96     &   0.08 &0.75 \\
\bottomrule
\end{tabular}
    \label{tab:traintime}
    \end{center}
\end{table}


\begin{table}[t]
\small
\begin{center}
\caption{\intalg vs. the interval abstraction and several variants.}
\begin{tabular}{l@{\hspace{0.2cm}}c@{\hspace{0.2cm}}c@{\hspace{0.2cm}}c@{\hspace{0.2cm}}c@{\hspace{0.2cm}}c}
\toprule
 & Int. Abs. & -Transform & -Normalize & -KDE & \intalg \\
\midrule
Weight abstraction rate & 19.60 & 55.25 & 22.89 & 48.10 & 70.61 \\
\midrule
Miscoverage & 7.20 & 5.42 & 1.5$\times 10^{-6}$ & 2.13 &  2.49 \\
\midrule
Overcoverage & 1 & 1.12 & 299539 & 1.62 & 2.80 \\
\bottomrule
\end{tabular}
    \label{tab:rq1}
    \end{center}
\end{table}

\begin{table}[t]
\small
\begin{center}
\caption{\alg vs. the interval abstraction variant (using $K$ networks).}
\begin{tabular}{l@{\hspace{0.2cm}}c@{\hspace{0.4cm}}r@{\hspace{0.2cm}}r@{\hspace{0.4cm}}r@{\hspace{0.2cm}}r}
\toprule
Dataset & Network & \multicolumn{2}{c}{Network abstraction rate} & \multicolumn{2}{c}{Overcoverage}  \\
 & & Int. Abs. & \tool & Int. Abs. & \tool \\
\midrule
Adult & $2\times50$ &   77.04 &  94.55  & 1.00 &   4.52  \\
      & $2\times100$ &   55.20  &   92.82      & 1.00 &    2.80     \\
      & $4\times50$  &   58.24 &  92.22  & 1.00 &   2.83  \\
\midrule
Bank & $2\times50$  &   80.06 & 93.89  & 1.00 & 2.54   \\
     & $2\times100$ &   73.21 & 90.13       & 1.00 &   3.21      \\
     & $4\times50$  &   74.80 & 90.47  & 1.00 & 1.93   \\
\midrule
Credit & $2\times50$ &  84.07 & 95.98  & 1.00 & 15.07  \\
      & $2\times100$ &  67.16 & 90.59   & 1.00 &  3.68      \\
      & $4\times50$  &  73.14 & 93.19   & 1.00 & 3.53   \\
\bottomrule
\end{tabular}
    \label{tab:rq2}
    \end{center}
\end{table}

\paragraph{Ablation study}
We next study the importance of \tool's steps in predicting an interval hyper-network.
Recall that \intalg predicts an interval for a given network parameter by running a normalization and the Yeo-Johnson transformation and transforming it into a distribution given by KDE.
We compare \intalg to interval abstraction, mapping a set of values into the interval defined by their minimum and maximum, and to three variants of \intalg:
(1)~\emph{-Transform}: does not use the normalization or the transformation and directly estimates the density with KDE, (2)~\emph{-Normalize}: does not use the normalization but uses the transformation,
(3)~\emph{-KDE}: transforms into a Gaussian distribution (as common), and thus employs a normalization based on the $L_2$ norm.
We run all approaches on the $2\times100$ network, trained for Adult.
\Cref{tab:rq1} reports the following:
\begin{itemize}[nosep,nolistsep]
  \item Weight abstraction rate: the average percentage of weights whose interval provides a (sound) abstraction.
  Note that this metric is not related to \Cref{lem}, guaranteeing that the value of a network parameter of a \emph{single} network is inside its predicted interval with probability $1-\alpha'$. 
  This metric is more challenging: it measures how many values of a given network parameter, over $|\mathcal{D}|+1$ networks, are inside the corresponding predicted interval.
  \item Miscoverage:  measures how much the predicted intervals need to expand to be an abstraction.
  It is the average over all intervals' miscoverage.
  The interval miscoverage is the ratio of the size of the difference between the optimal interval and the predicted interval and the size of the predicted interval.
  \item {Overcoverage}: measures how much wider than necessary the predicted intervals are.
  It is the geometric average over all intervals' overcoverage.
  The interval overcoverage is the ratio of the size of the join of the predicted interval and the optimal interval and the size of the optimal interval.
\end{itemize}
Results show that the weight abstraction rate of the interval abstraction is very low and \intalg has a 3.6x higher rate.
Results further show that \intalg obtains a very low miscoverage, by less than 2.9x compared to the interval abstraction.
As expected, these results come with a cost: a higher overcoverage.
An exception is the interval abstraction which, by definition, does not have an overcoverage.
Results further show that the combination of normalization, transformation, and KDE improve the weight abstraction rate of \intalg.
Next, we study how well \alg's hyper-networks abstract an (unseen) concrete network with a probability $\geq0.9$.
We compare to a variant that replaces \intalg by the interval abstraction, computed using the $K$ concrete networks reported  in~\Cref{tab:traintime}.
\Cref{tab:rq2} reports the network abstraction rate (the average percentage of concrete networks abstracted by the hyper-network) and overcoverage.
Results show that \alg obtains a very high network abstraction rate and always above $1-\alpha=0.9$.
In contrast, the variant obtains lower network abstraction rate with a very large variance.
As before, the cost is the over-approximation error.

\section{Related Work}
\label{sec:related}



\paragraph{Network abstraction}
Our key idea is to abstract a set of concrete networks (seen and unseen) into an interval hyper-network.
Several works rely on network abstraction to expedite verification of a single network.
The goal of the abstraction is to generate a smaller network, which can be analyzed faster, and that preserves soundness with respect to the original network.
One work proposes an abstraction-refinement approach that abstracts a network by splitting neurons into four types and then merging neurons of the same type~\cite{abst_net}.
Another work merges neurons similarly but chooses the neurons to merge by clustering~\cite{deep_abst}.
Other works abstract a neural network by abstracting its network parameters with intervals~\cite{PrabhakarA19} or other abstract domains~\cite{SotoudehT20}. 
In contrast, \tool abstracts a large set of networks, seen and unseen, and proves robustness for all of them.

\paragraph{Robustness verification}
\tool extends a MILP verifier~\cite{robust_milp} to verify local robustness given a hyper-network.
There are many local robustness verifiers.
Existing verifiers leverage various techniques, e.g., over-approximation~\cite{veri_nlin,opt_abst,form_safe}, linear relaxation~\cite{bound_prop,cnn_cert,safe_abst,abst_cert,convex_verify,beyond_single,poly_gpu}, simplex~\cite{smt_relu,marab_verify,abst_net}, mixed-integer linear programming~\cite{robust_milp,milp_bin,boost_cert}, and duality~\cite{dual_scale,cert_def}.
A different line of works verifies robustness to small perturbations to the network parameters~\cite{WengZ0CLD20,TsaiHYC21}. These works assume the parameters' perturbations are confined in a small $L_\infty$ $\epsilon$-ball and compute a lower and upper bounds on the network parameters by linearly bounding their computations. In contrast, our network parameters are not confined in an $\epsilon$-ball, and our analysis is complete if the inputs are (or processed to be) non-negative. 
\paragraph{Adaptive data analysis}
\alg relies on an iterative algorithm to predict the minimum and maximum of every network parameter.
Adaptive data analysis deals with estimating statistics based on data that is obtained iteratively.
Existing works focus on statistical queries computing the expectation of functions~\cite{valid_adapt,prevent_false,general_adapt,pres_adapt,adapt_median,emp_adapt,limit_general} or low-sensitivity queries~\cite{stab_adapt}.

\section{Conclusion}
\label{sec:conc}
We propose a privacy property for neural networks, called \propl, extending local robustness to the setting of differential privacy for black-box classifiers.
We then present \tool, a verifier for determining whether a network is \prop at a given neighborhood.
Instead of training all networks and verifying local robustness network-by-network, \tool predicts an interval hyper-network, providing an abstraction with a high probability, from a small number of networks.
To predict the intervals, \tool transforms the observed parameter values into a distribution given by KDE, using the Yeo-Johnson transformation.
\tool then verifies \prop at a neighborhood directly on the hyper-network, by extending a local robustness MILP verifier.
To mitigate an over-approximation error, we rely on preprocessing to the network's inputs or on a lower bound for them.
We evaluate \tool on data-sensitive datasets and show that  
by training only 7\% of the networks, \tool predicts a hyper-network abstracting any concrete network with a probability of at least $0.9$,
obtaining $93\%$ verification accuracy and 
reducing the verification time by $1.7\cdot10^4$x.

\subsubsection*{Acknowledgements}
We thank the anonymous reviewers for their feedback. 
      This research was supported by the Israel Science Foundation (grant No. 2605/20).

\bibliography{bib}

\begin{thebibliography}{10}
\providecommand{\url}[1]{\texttt{#1}}
\providecommand{\urlprefix}{URL }
\providecommand{\doi}[1]{https://doi.org/#1}

\bibitem{dl_dp}
Abadi, M., Chu, A., Goodfellow, I.J., McMahan, H.B., Mironov, I., Talwar, K.,
  Zhang, L.: Deep learning with differential privacy. In: Weippl, E.R.,
  Katzenbeisser, S., Kruegel, C., Myers, A.C., Halevi, S. (eds.) Proceedings of
  the {ACM} {SIGSAC} Conference on Computer and Communications Security. {ACM}
  (2016)

\bibitem{opt_abst}
Anderson, G., Pailoor, S., Dillig, I., Chaudhuri, S.: Optimization and
  abstraction: a synergistic approach for analyzing neural network robustness.
  In: McKinley, K.S., Fisher, K. (eds.) Proceedings of the 40th {ACM} {SIGPLAN}
  Conference on Programming Language Design and Implementation, {PLDI}. {ACM}
  (2019)

\bibitem{deep_abst}
Ashok, P., Hashemi, V., Kret{\'{\i}}nsk{\'{y}}, J., Mohr, S.: Deepabstract:
  Neural network abstraction for accelerating verification. In: Automated
  Technology for Verification and Analysis - 18th International Symposium,
  {ATVA}. Lecture Notes in Computer Science, Springer (2020)

\bibitem{BaharloueiNBR20}
Baharlouei, S., Nouiehed, M., Beirami, A., Razaviyayn, M.: R{\'{e}}nyi fair
  inference. In: 8th International Conference on Learning Representations,
  {ICLR}. OpenReview.net (2020)

\bibitem{stab_adapt}
Bassily, R., Nissim, K., Smith, A.D., Steinke, T., Stemmer, U., Ullman, J.R.:
  Algorithmic stability for adaptive data analysis. In: Wichs, D., Mansour, Y.
  (eds.) Proceedings of the 48th Annual {ACM} {SIGACT} Symposium on Theory of
  Computing, {STOC}. {ACM} (2016)

\bibitem{Bastani0S19}
Bastani, O., Zhang, X., Solar{-}Lezama, A.: Probabilistic verification of
  fairness properties via concentration. Proc. {ACM} Program. Lang.  (2019)

\bibitem{cnn_cert}
Boopathy, A., Weng, T., Chen, P., Liu, S., Daniel, L.: Cnn-cert: An efficient
  framework for certifying robustness of convolutional neural networks. In: The
  Thirty-Third {AAAI} Conference on Artificial Intelligence, {AAAI}, The
  Thirty-First Innovative Applications of Artificial Intelligence Conference,
  {IAAI}, The Ninth {AAAI} Symposium on Educational Advances in Artificial
  Intelligence, {EAAI}. {AAAI} Press (2019)

\bibitem{gauss_dp}
Bu, Z., Dong, J., Long, Q., Su, W.J.: Deep learning with gaussian differential
  privacy. CoRR  \textbf{abs/1911.11607} (2019)

\bibitem{toward_unlearn}
Cao, Y., Yang, J.: Towards making systems forget with machine unlearning. In:
  {IEEE} Symposium on Security and Privacy, {SP}. {IEEE} Computer Society
  (2015)

\bibitem{ldp_dl}
Chamikara, M.A.P., Bert{\'{o}}k, P., Khalil, I., Liu, D., Camtepe, S.,
  Atiquzzaman, M.: Local differential privacy for deep learning. {IEEE}
  Internet Things J.  (2020)

\bibitem{label_mia}
Choquette{-}Choo, C.A., Tram{\`{e}}r, F., Carlini, N., Papernot, N.: Label-only
  membership inference attacks. In: Meila, M., Zhang, T. (eds.) Proceedings of
  the 38th International Conference on Machine Learning, {ICML}. Proceedings of
  Machine Learning Research, {PMLR} (2021)

\bibitem{DrewsAD20}
Drews, S., Albarghouthi, A., D'Antoni, L.: Proving data-poisoning robustness in
  decision trees. In: Donaldson, A.F., Torlak, E. (eds.) Proceedings of the
  41st {ACM} {SIGPLAN} International Conference on Programming Language Design
  and Implementation, {PLDI}. pp. 1083--1097. {ACM} (2020)

\bibitem{uci}
Dua, D., Graff, C.: {UCI} machine learning repository (2017),
  \url{http://archive.ics.uci.edu/ml}

\bibitem{dual_scale}
Dvijotham, K., Stanforth, R., Gowal, S., Mann, T.A., Kohli, P.: A dual approach
  to scalable verification of deep networks. In: Globerson, A., Silva, R.
  (eds.) Proceedings of the Thirty-Fourth Conference on Uncertainty in
  Artificial Intelligence, {UAI}. {AUAI} Press (2018)

\bibitem{general_adapt}
Dwork, C., Feldman, V., Hardt, M., Pitassi, T., Reingold, O., Roth, A.:
  Generalization in adaptive data analysis and holdout reuse. In: Cortes, C.,
  Lawrence, N.D., Lee, D.D., Sugiyama, M., Garnett, R. (eds.) Advances in
  Neural Information Processing Systems 28: Annual Conference on Neural
  Information Processing Systems (2015)

\bibitem{pres_adapt}
Dwork, C., Feldman, V., Hardt, M., Pitassi, T., Reingold, O., Roth, A.: The
  reusable holdout: Preserving validity in adaptive data analysis. Science
  (2015)

\bibitem{valid_adapt}
Dwork, C., Feldman, V., Hardt, M., Pitassi, T., Reingold, O., Roth, A.L.:
  Preserving statistical validity in adaptive data analysis. In: Servedio,
  R.A., Rubinfeld, R. (eds.) Proceedings of the Forty-Seventh Annual {ACM} on
  Symposium on Theory of Computing, {STOC}. {ACM} (2015)

\bibitem{found_dp}
Dwork, C., Roth, A.: The algorithmic foundations of differential privacy.
  Found. Trends Theor. Comput. Sci.  (2014)

\bibitem{verify_pw}
Ehlers, R.: Formal verification of piece-wise linear feed-forward neural
  networks. In: D'Souza, D., Kumar, K.N. (eds.) Automated Technology for
  Verification and Analysis - 15th International Symposium, {ATVA},
  Proceedings. Lecture Notes in Computer Science, Springer (2017)

\bibitem{abst_net}
Elboher, Y.Y., Gottschlich, J., Katz, G.: An abstraction-based framework for
  neural network verification. In: Computer Aided Verification - 32nd
  International Conference, {CAV}, Proceedings, Part {I}. Lecture Notes in
  Computer Science, Springer (2020)

\bibitem{adapt_median}
Feldman, V., Steinke, T.: Generalization for adaptively-chosen estimators via
  stable median. In: Kale, S., Shamir, O. (eds.) Proceedings of the 30th
  Conference on Learning Theory, {COLT}. Proceedings of Machine Learning
  Research, {PMLR} (2017)

\bibitem{model_inv}
Fredrikson, M., Jha, S., Ristenpart, T.: Model inversion attacks that exploit
  confidence information and basic countermeasures. In: Ray, I., Li, N.,
  Kruegel, C. (eds.) Proceedings of the 22nd {ACM} {SIGSAC} Conference on
  Computer and Communications Security. {ACM} (2015)

\bibitem{FredriksonLJLPR14}
Fredrikson, M., Lantz, E., Jha, S., Lin, S.M., Page, D., Ristenpart, T.:
  Privacy in pharmacogenetics: An end-to-end case study of personalized
  warfarin dosing. In: Fu, K., Jung, J. (eds.) Proceedings of the 23rd {USENIX}
  Security Symposium, San Diego, CA, USA, August 20-22, 2014. {USENIX}
  Association (2014)

\bibitem{prop_ifer}
Ganju, K., Wang, Q., Yang, W., Gunter, C.A., Borisov, N.: Property inference
  attacks on fully connected neural networks using permutation invariant
  representations. In: Lie, D., Mannan, M., Backes, M., Wang, X. (eds.)
  Proceedings of the {ACM} {SIGSAC} Conference on Computer and Communications
  Security, {CCS}. {ACM} (2018)

\bibitem{safe_abst}
Gehr, T., Mirman, M., Drachsler{-}Cohen, D., Tsankov, P., Chaudhuri, S.,
  Vechev, M.T.: {AI2:} safety and robustness certification of neural networks
  with abstract interpretation. In: {IEEE} Symposium on Security and Privacy,
  {SP}. {IEEE} Computer Society (2018)

\bibitem{eval_unlearn}
Goel, S., Prabhu, A., Kumaraguru, P.: Evaluating inexact unlearning requires
  revisiting forgetting. CoRR  \textbf{abs/2201.06640} (2022)

\bibitem{adver_explain}
Goodfellow, I.J., Shlens, J., Szegedy, C.: Explaining and harnessing
  adversarial examples. In: Bengio, Y., LeCun, Y. (eds.) 3rd International
  Conference on Learning Representations, {ICLR}, Conference Track Proceedings
  (2015)

\bibitem{dp_view}
Ha, T., Dang, T.K., Dang, T.T., Truong, T.A., Nguyen, M.T.: Differential
  privacy in deep learning: An overview. In: L{\^{e}}, L., Dang, T.K., Minh,
  Q.T., Toulouse, M., Draheim, D., K{\"{u}}ng, J. (eds.) International
  Conference on Advanced Computing and Applications, {ACOMP}. {IEEE} Computer
  Society (2019)

\bibitem{prevent_false}
Hardt, M., Ullman, J.R.: Preventing false discovery in interactive data
  analysis is hard. In: 55th {IEEE} Annual Symposium on Foundations of Computer
  Science, {FOCS}. {IEEE} Computer Society (2014)

\bibitem{verify_fair}
John, P.G., Vijaykeerthy, D., Saha, D.: Verifying individual fairness in
  machine learning models. In: Proceedings of the Thirty-Sixth Conference on
  Uncertainty in Artificial Intelligence, {UAI}. Proceedings of Machine
  Learning Research, {AUAI} Press (2020)

\bibitem{smt_relu}
Katz, G., Barrett, C.W., Dill, D.L., Julian, K., Kochenderfer, M.J.: Reluplex:
  An efficient {SMT} solver for verifying deep neural networks. In: Majumdar,
  R., Kuncak, V. (eds.) Computer Aided Verification - 29th International
  Conference, {CAV}, Proceedings, Part {I}. Lecture Notes in Computer Science,
  Springer (2017)

\bibitem{marab_verify}
Katz, G., Huang, D.A., Ibeling, D., Julian, K., Lazarus, C., Lim, R., Shah, P.,
  Thakoor, S., Wu, H., Zeljic, A., Dill, D.L., Kochenderfer, M.J., Barrett,
  C.W.: The marabou framework for verification and analysis of deep neural
  networks. In: Dillig, I., Tasiran, S. (eds.) Computer Aided Verification -
  31st International Conference, {CAV}, Proceedings, Part {I}. Lecture Notes in
  Computer Science, Springer (2019)

\bibitem{milp_bin}
Lazarus, C., Kochenderfer, M.J.: A mixed integer programming approach for
  verifying properties of binarized neural networks. In: Espinoza, H.,
  McDermid, J.A., Huang, X., Castillo{-}Effen, M., Chen, X.C.,
  Hern{\'{a}}ndez{-}Orallo, J., h{\'{E}}igeartaigh, S.{\'{O}}., Mallah, R.,
  Pedroza, G. (eds.) Proceedings of the Workshop on Artificial Intelligence
  Safety co-located with the Thirtieth International Joint Conference on
  Artificial Intelligence {(IJCAI}). {CEUR} Workshop Proceedings, CEUR-WS.org
  (2021)

\bibitem{LecuyerAG0J19}
L{\'{e}}cuyer, M., Atlidakis, V., Geambasu, R., Hsu, D., Jana, S.: Certified
  robustness to adversarial examples with differential privacy. In: 2019 {IEEE}
  Symposium on Security and Privacy, {SP}. pp. 656--672. {IEEE} (2019)

\bibitem{member_label}
Li, Z., Zhang, Y.: Membership leakage in label-only exposures. In: Kim, Y.,
  Kim, J., Vigna, G., Shi, E. (eds.) {CCS}: {ACM} {SIGSAC} Conference on
  Computer and Communications Security. {ACM} (2021)

\bibitem{unlearn_mask}
Liu, Y., Ma, Z., Liu, X., Ma, J.: Learn to forget: Machine unlearning via
  neuron masking. IEEE Transactions on Dependable and Secure Computing  (2022)

\bibitem{under_mia}
Long, Y., Bindschaedler, V., Wang, L., Bu, D., Wang, X., Tang, H., Gunter,
  C.A., Chen, K.: Understanding membership inferences on well-generalized
  learning models. CoRR  \textbf{abs/1802.04889} (2018)

\bibitem{redu_fair}
Mazzucato, D., Urban, C.: Reduced products of abstract domains for fairness
  certification of neural networks. In: Dragoi, C., Mukherjee, S., Namjoshi,
  K.S. (eds.) Static Analysis - 28th International Symposium, {SAS}. Lecture
  Notes in Computer Science, Springer (2021)

\bibitem{MeyerAD21}
Meyer, A.P., Albarghouthi, A., D'Antoni, L.: Certifying robustness to
  programmable data bias in decision trees. In: Ranzato, M., Beygelzimer, A.,
  Dauphin, Y.N., Liang, P., Vaughan, J.W. (eds.) Advances in Neural Information
  Processing Systems 34: Annual Conference on Neural Information Processing
  Systems {NeurIPS}. pp. 26276--26288 (2021)

\bibitem{Monerale23}
Monreale, A., Naretto, F., Rizzo, S.: Agnostic label-only membership inference
  attack. In: Li, S., Manulis, M., Miyaji, A. (eds.) Network and System
  Security. Springer Nature Switzerland (2023)

\bibitem{bank_market}
Moro, S., Cortez, P., Rita, P.: A data-driven approach to predict the success
  of bank telemarketing. Decision Support Systems  (2014)

\bibitem{poly_gpu}
M{\"{u}}ller, C., Serre, F., Singh, G., P{\"{u}}schel, M., Vechev, M.T.:
  Scaling polyhedral neural network verification on gpus. In: Smola, A.,
  Dimakis, A., Stoica, I. (eds.) Proceedings of Machine Learning and Systems,
  MLSys. mlsys.org (2021)

\bibitem{anal_mia}
Nasr, M., Shokri, R., Houmansadr, A.: Comprehensive privacy analysis of deep
  learning: Passive and active white-box inference attacks against centralized
  and federated learning. In: {IEEE} Symposium on Security and Privacy, {SP}.
  {IEEE} (2019)

\bibitem{dp_grad}
Nasr, M., Shokri, R., Houmansadr, A.: Improving deep learning with differential
  privacy using gradient encoding and denoising. CoRR  \textbf{abs/2007.11524}
  (2020)

\bibitem{verify_backdoor}
Pham, L.H., Sun, J.: Verifying neural networks against backdoor attacks. In:
  Shoham, S., Vizel, Y. (eds.) Computer Aided Verification - 34th International
  Conference, {CAV}, Proceedings, Part {I}. Lecture Notes in Computer Science,
  Springer (2022)

\bibitem{PhanTHJSD20}
Phan, N., Thai, M.T., Hu, H., Jin, R., Sun, T., Dou, D.: Scalable differential
  privacy with certified robustness in adversarial learning. In: Proceedings of
  the 37th International Conference on Machine Learning, {ICML}. Proceedings of
  Machine Learning Research, vol.~119, pp. 7683--7694. {PMLR} (2020)

\bibitem{PrabhakarA19}
Prabhakar, P., Afzal, Z.R.: Abstraction based output range analysis for neural
  networks. In: Wallach, H.M., Larochelle, H., Beygelzimer, A.,
  d'Alch{\'{e}}{-}Buc, F., Fox, E.B., Garnett, R. (eds.) Advances in Neural
  Information Processing Systems 32: Annual Conference on Neural Information
  Processing Systems {NeurIPS}. pp. 15762--15772 (2019)

\bibitem{smt_verify}
Pulina, L., Tacchella, A.: Challenging {SMT} solvers to verify neural networks.
  {AI} Commun.  (2012)

\bibitem{veri_nlin}
Qin, C., Dvijotham, K.D., O'Donoghue, B., Bunel, R., Stanforth, R., Gowal, S.,
  Uesato, J., Swirszcz, G., Kohli, P.: Verification of non-linear
  specifications for neural networks. In: 7th International Conference on
  Learning Representations, {ICLR}. OpenReview.net (2019)

\bibitem{cert_def}
Raghunathan, A., Steinhardt, J., Liang, P.: Certified defenses against
  adversarial examples. In: 6th International Conference on Learning
  Representations, {ICLR}, Conference Track Proceedings. OpenReview.net (2018)

\bibitem{RodriguezBGE18}
Rodr{\'{\i}}guez, P., Bautista, M.{\'{A}}., Gonz{\`{a}}lez, J., Escalera, S.:
  Beyond one-hot encoding: Lower dimensional target embedding. Image Vis.
  Comput.  \textbf{75},  21--31 (2018)

\bibitem{emp_adapt}
Rogers, R., Roth, A., Smith, A.D., Srebro, N., Thakkar, O., Woodworth, B.E.:
  Guaranteed validity for empirical approaches to adaptive data analysis. In:
  Chiappa, S., Calandra, R. (eds.) The 23rd International Conference on
  Artificial Intelligence and Statistics, {AISTATS}. Proceedings of Machine
  Learning Research, {PMLR} (2020)

\bibitem{RuossBFV20}
Ruoss, A., Balunovic, M., Fischer, M., Vechev, M.T.: Learning certified
  individually fair representations. In: Larochelle, H., Ranzato, M., Hadsell,
  R., Balcan, M., Lin, H. (eds.) Advances in Neural Information Processing
  Systems 33: Annual Conference on Neural Information Processing Systems,
  NeurIPS (2020)

\bibitem{poly_rnn}
Ryou, W., Chen, J., Balunovic, M., Singh, G., Dan, A.M., Vechev, M.T.: Scalable
  polyhedral verification of recurrent neural networks. In: Silva, A., Leino,
  K.R.M. (eds.) Computer Aided Verification - 33rd International Conference,
  {CAV}, Proceedings, Part {I}. Lecture Notes in Computer Science, Springer
  (2021)

\bibitem{convex_verify}
Salman, H., Yang, G., Zhang, H., Hsieh, C., Zhang, P.: A convex relaxation
  barrier to tight robustness verification of neural networks. In: Wallach,
  H.M., Larochelle, H., Beygelzimer, A., d'Alch{\'{e}}{-}Buc, F., Fox, E.B.,
  Garnett, R. (eds.) Advances in Neural Information Processing Systems 32:
  Annual Conference on Neural Information Processing Systems, NeurIPS (2019)

\bibitem{member_infer}
Shokri, R., Stronati, M., Song, C., Shmatikov, V.: Membership inference attacks
  against machine learning models. In: {IEEE} Symposium on Security and
  Privacy, {SP}. {IEEE} Computer Society (2017)

\bibitem{beyond_single}
Singh, G., Ganvir, R., P{\"{u}}schel, M., Vechev, M.T.: Beyond the single
  neuron convex barrier for neural network certification. In: Wallach, H.M.,
  Larochelle, H., Beygelzimer, A., d'Alch{\'{e}}{-}Buc, F., Fox, E.B., Garnett,
  R. (eds.) Advances in Neural Information Processing Systems 32: Annual
  Conference on Neural Information Processing Systems, NeurIPS (2019)

\bibitem{abst_cert}
Singh, G., Gehr, T., P{\"{u}}schel, M., Vechev, M.T.: An abstract domain for
  certifying neural networks. Proc. {ACM} Program. Lang. {POPL}  (2019)

\bibitem{boost_cert}
Singh, G., Gehr, T., P{\"{u}}schel, M., Vechev, M.T.: Boosting robustness
  certification of neural networks. In: 7th International Conference on
  Learning Representations, {ICLR}. OpenReview.net (2019)

\bibitem{SotoudehT20}
Sotoudeh, M., Thakur, A.V.: Abstract neural networks. In: Pichardie, D.,
  Sighireanu, M. (eds.) Static Analysis - 27th International Symposium, {SAS}.
  Lecture Notes in Computer Science, vol. 12389, pp. 65--88. Springer (2020)

\bibitem{Sun0GP22}
Sun, Y., Usman, M., Gopinath, D., Pasareanu, C.S.: {VPN:} verification of
  poisoning in neural networks. In: Isac, O., Ivanov, R., Katz, G., Narodytska,
  N., Nenzi, L. (eds.) Software Verification and Formal Methods for ML-Enabled
  Autonomous Systems - 5th International Workshop, FoMLAS 2022, and 15th
  International Workshop, {NSV}. Lecture Notes in Computer Science, vol. 13466,
  pp. 3--14. Springer (2022)

\bibitem{nn_prop}
Szegedy, C., Zaremba, W., Sutskever, I., Bruna, J., Erhan, D., Goodfellow,
  I.J., Fergus, R.: Intriguing properties of neural networks. In: Bengio, Y.,
  LeCun, Y. (eds.) 2nd International Conference on Learning Representations,
  {ICLR}, Conference Track Proceedings (2014)

\bibitem{robust_milp}
Tjeng, V., Xiao, K.Y., Tedrake, R.: Evaluating robustness of neural networks
  with mixed integer programming. In: 7th International Conference on Learning
  Representations, {ICLR}. OpenReview.net (2019)

\bibitem{steal_model}
Tram{\`{e}}r, F., Zhang, F., Juels, A., Reiter, M.K., Ristenpart, T.: Stealing
  machine learning models via prediction apis. In: Holz, T., Savage, S. (eds.)
  25th {USENIX} Security Symposium, {USENIX} Security 16. {USENIX} Association
  (2016)

\bibitem{Dung20}
Tran, H., Bak, S., Xiang, W., Johnson, T.T.: Verification of deep convolutional
  neural networks using imagestars. In: Lahiri, S.K., Wang, C. (eds.) Computer
  Aided Verification - 32nd International Conference, {CAV}, Proceedings, Part
  {I}. Lecture Notes in Computer Science, Springer (2020)

\bibitem{TsaiHYC21}
Tsai, Y., Hsu, C., Yu, C., Chen, P.: Formalizing generalization and adversarial
  robustness of neural networks to weight perturbations. In: Ranzato, M.,
  Beygelzimer, A., Dauphin, Y.N., Liang, P., Vaughan, J.W. (eds.) Advances in
  Neural Information Processing Systems 34: Annual Conference on Neural
  Information Processing Systems, {NeurIPS}. pp. 19692--19704 (2021)

\bibitem{proof_trans}
Ugare, S., Singh, G., Misailovic, S.: Proof transfer for fast certification of
  multiple approximate neural networks. Proc. {ACM} Program. Lang.  (2022)

\bibitem{limit_general}
Ullman, J.R., Smith, A.D., Nissim, K., Stemmer, U., Steinke, T.: The limits of
  post-selection generalization. In: Bengio, S., Wallach, H.M., Larochelle, H.,
  Grauman, K., Cesa{-}Bianchi, N., Garnett, R. (eds.) Advances in Neural
  Information Processing Systems 31: Annual Conference on Neural Information
  Processing Systems, NeurIPS (2018)

\bibitem{paral_fair}
Urban, C., Christakis, M., W{\"{u}}stholz, V., Zhang, F.: Perfectly parallel
  fairness certification of neural networks. Proc. {ACM} Program. Lang.  (2019)

\bibitem{form_safe}
Wang, S., Pei, K., Whitehouse, J., Yang, J., Jana, S.: Efficient formal safety
  analysis of neural networks. In: Bengio, S., Wallach, H.M., Larochelle, H.,
  Grauman, K., Cesa{-}Bianchi, N., Garnett, R. (eds.) Advances in Neural
  Information Processing Systems 31: Annual Conference on Neural Information
  Processing Systems, NeurIPS (2018)

\bibitem{bound_prop}
Wang, S., Zhang, H., Xu, K., Lin, X., Jana, S., Hsieh, C., Kolter, J.Z.:
  Beta-crown: Efficient bound propagation with per-neuron split constraints for
  neural network robustness verification. In: Ranzato, M., Beygelzimer, A.,
  Dauphin, Y.N., Liang, P., Vaughan, J.W. (eds.) Advances in Neural Information
  Processing Systems 34: Annual Conference on Neural Information Processing
  Systems, NeurIPS (2021)

\bibitem{WengZ0CLD20}
Weng, T., Zhao, P., Liu, S., Chen, P., Lin, X., Daniel, L.: Towards
  certificated model robustness against weight perturbations. In: The
  Thirty-Fourth {AAAI} Conference on Artificial Intelligence, {AAAI}

\bibitem{paral_verify}
Wu, H., Ozdemir, A., Zeljic, A., Julian, K., Irfan, A., Gopinath, D., Fouladi,
  S., Katz, G., Pasareanu, C.S., Barrett, C.W.: Parallelization techniques for
  verifying neural networks. In: Formal Methods in Computer Aided Design,
  {FMCAD}. {IEEE} (2020)

\bibitem{onehot}
Ye, A.: Stop one-hot encoding your categorical variables. (2020),
  \url{https://medium.com/analytics-vidhya/stop-one-hot-encoding-your-categorical-variables-bbb0fba89809}

\bibitem{enhance_mia}
Ye, J., Maddi, A., Murakonda, S.K., Bindschaedler, V., Shokri, R.: Enhanced
  membership inference attacks against machine learning models. In: Yin, H.,
  Stavrou, A., Cremers, C., Shi, E. (eds.) Proceedings of the {ACM} {SIGSAC}
  Conference on Computer and Communications Security, {CCS}. {ACM} (2022)

\bibitem{credit_card}
Yeh, I.C., Lien, C.h.: The comparisons of data mining techniques for the
  predictive accuracy of probability of default of credit card clients. Expert
  systems with applications  (2009)

\bibitem{transform}
Yeo, I.K., Johnson, R.A.: A new family of power transformations to improve
  normality or symmetry. Biometrika  (2000)

\end{thebibliography}
\ifthenelse{\boolean{shortver}}{}{
\appendix
\section{Kernel Density Estimation}
\label{app:kde}
\sloppy
As described, \intalg adapts the Yeo-Johnson transformation to transform an unknown distribution into a distribution given by kernel density estimation (KDE) with a Laplace kernel.
We next provide a short background and explain how \intalg uses KDE.
KDE is a kernel smoothing technique, estimating an unknown distribution using a weighted average of neighboring samples.
The weight is determined by a \emph{kernel}, which is a probability density function $g(\cdot)$ with mean 0.
KDE is parameterized by a positive \emph{bandwidth} $h$, determining the scale of the kernel.
The density estimation function is: $\hat{f}_y(v)=\frac{1}{Kh}\sum_{i=1}^{K}g\left(\frac{v-y_i}{h}\right)$.
The advantages of KDE is that it can capture complex densities, and that it converges to the true density when $K\to\infty$ and $h\to0$.
Many popular kernel functions are bounded, making them unsuitable for our setting, where the goal is to predict the minimum and maximum values.
We thus focus on unbounded kernels.
The Gaussian kernel is unbounded, however it is challenging for computing the confidence interval since the decaying rate varies.
We thus rely on the Laplace kernel $g(x)=\frac{1}{2}e^{-|x|}$, also known as the exponential kernel.
It is especially suitable for our setting because it has a heavy tail, resulting in a wider interval, leaving room for unseen outliers.
Additionally, computing its confidence interval can be done efficiently.
Technically, the confidence interval is: $[l_y,u_y]=\left[F_y^{-1}\left(\frac{\alpha'}{2}\right),F_y^{-1}\left(1-\frac{\alpha'}{2}\right)\right]=\left[-h\log\left(\frac{1}{K}\sum_{i=1}^Ke^{-\frac{y_i}{h}}\right)-h\cdot\log\left(\frac{1}{\alpha'}\right),h\log\left(\frac{1}{K}\sum_{i=1}^Ke^{\frac{y_i}{h}}\right)+h\cdot\log\left(\frac{1}{\alpha'}\right)\right]$,
where $F$ is the CDF of the kernel density estimator and $y_i$-s are the transformed values.
Note that this is a non-symmetric interval which is wider than the interval that we would obtain if we assumed the Laplace distribution (without KDE), in which case the interval is symmetric and equal to: $\left[\text{median}(y_1\ldots,y_K)-h\cdot\log\left(\frac{1}{\alpha'}\right),\text{median}(y_1\ldots,y_K)+h\cdot\log\left(\frac{1}{\alpha'}\right)\right]$.
The bandwidth $h$ is a hyper-parameter, but it cannot be computed using MLE because as $h$ decreases, the likelihood increases, not considering any neighboring samples.
Instead, the bandwidth is usually chosen to minimize the error between the real (unknown) distribution and a distribution given by KDE.
However, this results in a too small bandwidth leading to a too small interval, which will not abstract well.
On the other hand, choosing a too large bandwidth leads to a too large over-approximation error.
To balance, \intalg sets $h$ to be the centered absolute first moment: $h=\frac{1}{K}\sum_{i=1}^K|y_i-\mu_y|$, where $\mu_y=\text{median}\{y_1,\ldots,y_K\}$.

\section{Proofs}\label{sec:proofs}
In this section, we provide proofs.

\ftc*
\begin{proof}
Since the normalization and transformation are bijective functions, for $[l_w,u_w]$ to contain an unseen value $w_i$, for $i\in\{K+1,\ldots,|\mathcal{D}|+1\}$, with a probability of $1-\alpha'$, it suffices to show that $[l_y,u_y]$ satisfies this requirement for $y_i$.
  For $[l_y,u_y]$ to satisfy this requirement, by the computation of a confidence interval, three conditions have to hold (1)~the transformed values $y_1,\ldots,y_K$ have to be IID, (2)~they have to be sufficient to predict the correct distribution, and (3)~\intalg has to have their distribution. 
Because the normalization and transformation are bijective functions, conditions (1) and (2) stem from the assumption that $w_1,\ldots,w_K$ satisfy conditions (1) and (2). 
To have the distribution of $y_1,\ldots,y_K$ (condition (3)), \intalg requires the Yeo-Johnson transformation to succeed.  
While there is no guarantee, similarly to~\cite{transform}, this transformation succeeds if its inputs $z_1,\ldots,z_K$ are IID, sufficient to predict the correct distribution and there exists $\lambda\in[0,2]$ such that the distribution of the transformed normalized values $y_1,\ldots,y_K$ is \emph{similar} to a distribution given by KDE with a Laplace kernel. 
The inputs $z_1,\ldots,z_K$ are practically IID because $w_1,\ldots,w_K$ are IID and the normalization is a bijective function  
(the only source of dependence is that given $K-1$ normalized samples, the last normalized sample can be inferred). The inputs are sufficient to predict the correct distribution because we assume $w_1,\ldots,w_K$ are sufficient. Lastly, the requirement about $\lambda$ is one of the lemma's assumptions.
\end{proof}

\fta*

\begin{proof}
We show that the conditions of \Cref{lem} hold, for every weight $w$ and for $\alpha'=\frac{\alpha}{|\mathcal{W}\cup \mathcal{B}|}$, and thus by the union bound, it follows that the hyper-network provides an abstraction with a probability of $1-\alpha$. Let $w$ be a network parameter.
First, 
the observed values $w_1,\ldots,w_K$ are IID since each is computed independently by the training algorithm $\mathcal{T}$ when given the training set without a random sample. 
Second, the number of observed values is sufficient to predict the correct distribution, because of our stopping condition.
By its definition, if the stopping condition is true, $R$ is close to $0$, and $M$ is close to $1$, then all weights' distributions have converged to their expected distribution. 
Third, the lemma requires that there exists $\lambda\in[0,2]$ such that the distribution of the transformed normalized values $y_1,\ldots,y_K$ is \emph{similar} to a distribution given by KDE with a Laplace kernel (using the bandwidth defined in~\Cref{app:kde}). This holds in practice because, given IID samples that suffice to predict the correct distribution, KDE provides a good estimation for an unknown distribution and constraining $\lambda\in[0,2]$ practically does not affect this estimation. 
\end{proof}

\ftb*
\begin{proof}
Consider an affine variable $\hat{x}=W\cdot x + b$.
Assume its input $x$ is non-negative.
Given the lower and upper bounds on the weights and biases, because $x$ is non-negative and by interval arithmetic, $\hat{x}$ is bounded in $[l_W\cdot x+l_b, u_W\cdot x+u_b]$. Moreover, this interval is tight, since the lower bound is obtained when $W=l_W$ and $b=l_b$ and the upper bound is obtained when $W=u_W$ and $b=u_b$. 
The ReLU encoding is identical to~\cite{robust_milp} and thus soundly and precisely captures its computation. Similarly, the neighborhood's encoding and the local robustness check's encoding are identical to~\cite{robust_milp} and are thus sound and complete.
 If $x$ may be negative but has a lower bound $l_x\leq x$, then our encoding bounds $\hat{x}$ in $[l_W\cdot x+l_b-(u_W-l_W)\cdot\max(0,-l_x),u_W\cdot x+u_b+(u_W-l_W)\cdot\max(0,-l_x)]$. This bound is sound because we can write $\hat{x}=W\cdot x+b=W\cdot(x-l_x)+W\cdot l_x+b$. Then, we rewrite this expression as follows: 
 $W\cdot(x-l_x)+W\cdot \max\{0,l_x\}-W\cdot \max\{0,-l_x\}+b$. Note that $x-l_x\geq 0$, $\max\{0,l_x\}\geq0$, and $-\max\{0,-l_x\}\leq 0$. Thus, this expression is upper bounded by:
 $u_W\cdot(x-l_x)+u_W\cdot \max\{0,l_x\}-l_W\cdot \max\{0,-l_x\}+u_b$. By rearranging it, we obtain the upper bound:
  $u_W\cdot x +u_b + u_W\cdot(\max\{0,l_x\}-l_x)-l_W\cdot \max\{0,-l_x\}=u_W\cdot x +u_b + (u_W-l_W)\cdot \max\{0,-l_x\}$. Similarly, we obtain the lower bound.

\end{proof}

}
\end{document}